\providecommand*{\input@path}{}\edef\input@path{{./scoop-latex/}\input@path}
\title[Geometry Denoising with Preferred Normal Vectors]{Geometry Denoising with Preferred Normal Vectors}
\author[M. Weiß]{Manuel Weiß\orcidlink{0000-0002-6098-9725}}
\address[M. Weiß]{Interdisciplinary Center for Scientific Computing, Heidelberg University, 69120 Heidelberg, Germany}
\email{\detokenize{manuel.weiss@iwr.uni-heidelberg.de}}
\urladdr{https://scoop.iwr.uni-heidelberg.de}
\author[L. Baumgärtner]{Lukas Baumgärtner\orcidlink{0000-0003-1007-4815}}
\address[L. Baumgärtner]{Department of Mathematics, Humboldt University of Berlin, 10099 Berlin, Germany}
\email{\detokenize{lukas.baumgaertner@hu-berlin.de}}
\urladdr{httpss://www.mathematik.hu-berlin.de/en/people/mem-vz/1693318}
\author[R. Herzog]{Roland Herzog\orcidlink{0000-0003-2164-6575}}
\address[R. Herzog]{Interdisciplinary Center for Scientific Computing, Heidelberg University, 69120 Heidelberg, Germany and Institute for Mathematics, Heidelberg University, 69120 Heidelberg, Germany}
\email{\detokenize{roland.herzog@iwr.uni-heidelberg.de}}
\urladdr{https://scoop.iwr.uni-heidelberg.de}
\author[S. Schmidt]{Stephan Schmidt\orcidlink{0000-0002-4888-0794}}
\address[S. Schmidt]{University of Trier, Universitätsring 15, 54296 Trier, Germany}
\email{\detokenize{stephan.schmidt@uni-trier.de}}
\urladdr{https://www.math.uni-trier.de/\string~schmidt}
\thanks{This work was supported by DFG grants HE~6077/10--2 and SCHM~3248/2--2 within the Priority Program SPP~1962 (Non-smooth and Complementarity-based Distributed Parameter Systems: Simulation and Hierarchical Optimization), which is gratefully acknowledged.}
\date{}
\dedicatory{}
\begin{document}

\begin{abstract}
We introduce a new paradigm for geometry denoising using prior knowledge about the surface normal vector.
This prior knowledge comes in the form of a set of preferred normal vectors, which we refer to as label vectors.
A segmentation problem is naturally embedded in the denoising process.
The segmentation is based on the similarity of the normal vector to the elements of the set of label vectors.
Regularization is achieved by a total variation term.
We formulate a split Bregman (ADMM) approach to solve the resulting optimization problem.
The vertex update step is based on second-order shape calculus.
We present various examples including the denoising of an eroded medieval gravestone inscription.

\end{abstract}

\keywords{geometry denoising, normal vectors, split Bregman iteration, assignment function}

\makeatletter
\ltx@ifpackageloaded{hyperref}{%
\subjclass[2010]{\href{https://mathscinet.ams.org/msc/msc2020.html?t=65D18}{65D18}, \href{https://mathscinet.ams.org/msc/msc2020.html?t=49Q10}{49Q10}, \href{https://mathscinet.ams.org/msc/msc2020.html?t=49M15}{49M15}, \href{https://mathscinet.ams.org/msc/msc2020.html?t=65K05}{65K05}, \href{https://mathscinet.ams.org/msc/msc2020.html?t=90C30}{90C30}}
}{%
\subjclass[2010]{65D18, 49Q10, 49M15, 65K05, 90C30}
}
\makeatother

\maketitle

\section{Introduction}
\label{section:Introduction}

Developing efficient computational methods for geometry processing is a core problem in computer graphics.
This includes challenges such as denoising and segmenting surfaces.
In this work, we address both tasks simultaneously by formulating and solving a single optimization problem.
The denoising process is guided by the goal of aligning each surface normal with one of a specified set of preferred normals.
Applications in which we can expect to possess prior knowledge about preferred normal vectors include architecture and crystallography.

We consider surfaces represented by discrete meshes~$\mesh$ with piecewise constant normal vectors.
Generally speaking, denoising requires striking a balance between noise removal and the preservation of important features such as sharp edges \cite{ChenLiWeiWang:2023:1}.
Mesh fairing approaches, \eg, \cite{Taubin:1995:1,DesbrunMeyerSchroederBarr:1999:1}, can give good recovery results but are not designed to preserve sharp features.
$L^0$-minimization \cite{HeSchaefer:2013:1,ZhaoQinZengXuDong:2018:1}, by constrast, can recover sharp features of the geometry, but the non-convex and combinatorial nature of the model may lead to high computational costs.
Total-variation based models \cite{TasdizenWhitakerBurchardOsher:2002:1,BaumgaertnerBergmannHerzogSchmidtVidalNunezWeiss:2025:1} are likewise able to recover sharp features of the geometry, but generally exhibit staircasing effects.

Segmentation aims to partition a given surface into disjoint regions, based on certain features \cite{Shamir:2008:1}.
In particular, it is relatively straightforward to repurpose the variational concept based on an assignment function with total-variation (TV) regularization introduced for image segmentation; see for instance \cite{LellmannKappesYuanBeckerSchnoerr:2009:1}.
We refer the reader to \cite{BaumgaertnerBergmannHerzogSchmidtWeiss:2024:1} for details and for a second, alternative approach based on the TV of the normal vector in place of the TV of the assignment function.

In this paper, we formulate a variational approach that combines the tasks of denoising and segmentation.
We work with triangulated, oriented surfaces embedded in~$\R^3$.
These are represented by meshes~$\mesh$ consisting of triangles~$\triangles$, edges~$\edges$ and vertices~$\vertices$.
Throughout the optimization process, the mesh connectivity is kept fixed but the vertex positions are altered.
The optimization objective combines an application-dependent fidelity term~$\fidelity(\mesh)$, which depends smoothly on the vertex positions (to be specified in \cref{section:numerical-examples}), with a term that promotes the alignment of the triangle-wise normal vector~$\bn_\triangle$ with one of the preferred normal vectors~$\gl$ and simultaneously provides segmentation information.
For convenience, the preferred normal vectors are assumed to have unit length, \ie, $\gl \in \Sphere \coloneqq \setDef{\by \in \R^3}{\abs{\by}_2 = 1}$ holds for $\ell = 1, \ldots, L$.
The segmentation task involves assigning to each triangle $\triangle \in \triangles$ a label corresponding to one of the preferred unit normals~$\gl$, based on the distance of~$\bn_\triangle$ and $\gl$.
The outcome of the segmentation is described by an assignment function $\assign \colon \triangles \to \Delta_L$ mapping each triangle into the probability simplex~$\Delta_L \coloneqq \setDef{\by \in \R^L}{\bone^\transp \by = 1, \; \by \ge \bnull}$.

\subsection*{Contributions}

We propose the following variational framework to achieve the denoising of a surface while incorporating prior knowledge in terms of a set of preferred unit normal vectors~$\gl$, $\ell = 1, \ldots, L$:
\begin{equation}
	\label{eq:general-model}
	\begin{aligned}
		\text{Minimize}
		\;
		&
		\fidelity(\mesh)
		+
		\assignmentWeight \!
		\sum_{\triangle \in \triangles} \abs{\triangle}
		\sum_{\ell=1}^L \doubleindex{\assign} \, \abs{\bn_\triangle - \gl}_2
		+
		\TVweight \TV(\assign)
		\\
		\text{\st}
		\;
		&
		\assign_\triangle \in \Simplex
		\quad
		\text{for all }
		\triangle \in \triangles
		.
	\end{aligned}
\end{equation}
In \eqref{eq:general-model}, the optimization variables are the vertex positions~$\vertexpositions$ of the mesh~$\mesh$, along with the piecewise constant assignment function $\assign \colon \triangles \to \Simplex$.
Since the TV-seminorm \eqref{eq:assignment-function:total-variation} amounts to the $1$-norm of a vector of differences, the \emph{partial} minimization of \eqref{eq:general-model} \wrt $\assign$ is equivalent to a linear optimization problem subject to simplex constraints.
Therefore, $\assign$ will generically attain values in the vertices of the simplex~$\Simplex$, corresponding to an unambiguous assignment of a single label to each triangle.

The first term in \eqref{eq:general-model} is typically a fidelity term that takes into account problem data such as measured vertex positions.
In the second term, $\abs{\triangle}$ denotes the area of a triangle~$\triangle$.
Moreover, $\abs{\bn_\triangle - \gl}_2$ denotes the distance of a triangle's normal vector (depending on the positions of its vertices) to any of the given vectors~$\gl$.
In this paper we use the Euclidean distance in~$\R^3$ for simplicity.
Although the geodesic distance on the unit sphere~$\Sphere$ would be a more appropriate measure of distance, see for instance \cite{LellmannStrekalovskiyKoetterCremers:2013:1,BergmannHerrmannHerzogSchmidtVidalNunez:2020:2}, it does complicate the optimization problem; see, \eg, \cite{BaumgaertnerBergmannHerzogSchmidtVidalNunezWeiss:2025:1}.
Independently of the metric used, the second term in \eqref{eq:general-model} encourages the vertex positions to be altered so that the normal vector $\bn_\triangle$ on each triangle~$\triangle$ coincides \emph{exactly} with one of the preferred normal vectors~$\gl$.
This is due to it acting as an exact penalty function, provided that the value of the assignment function $\assign_\triangle$ is a vertex of the simplex~$\Delta_L$.

The last term in \eqref{eq:general-model} is the total variation of the assignment function~$\assign$, detailed below in \eqref{eq:assignment-function:total-variation}.
It acts as a regularization term that encourages regions of constant assignment to one of the labels, and thus flat regions featuring a constant normal vector.

Problem \eqref{eq:general-model} can be understood as the combination of a mesh denoising approach without preferred normal vector information, based on the model
\begin{equation}
  \label{eq:problem:baseline-tv-based-denoising}
	\text{Minimize}
	\quad
  \fidelity(\mesh)
	+
	\TVweightnormal
  \sum_{\edge\in\edges}
  \abs{\edge}
  \,
  \abs{(\logarithm{\bn_+}{\bn_-}) \cdot \bmu_+}
\end{equation}
considered in \cite{BaumgaertnerBergmannHerzogSchmidtVidalNunezWeiss:2025:1}, and the segmentation problem without denoising based on
\begin{equation}
	\label{eq:segmentation-only}
	\begin{aligned}
		\text{Minimize}
		\quad
		&
		\sum_{\triangle \in \triangles} \abs{\triangle}
		\sum_{\ell=1}^L \doubleindex{\assign} \, \abs{\bn_\triangle - \gl}_2
		+
		\TVweight \TV(\assign)
		\\
		\text{\st}
		\quad
		&
		\assign_\triangle \in \Simplex
		\quad
		\text{for all }
		\triangle \in \triangles
	\end{aligned}
\end{equation}
discussed, \eg, in \cite{LellmannStrekalovskiyKoetterCremers:2013:1,BaumgaertnerBergmannHerzogSchmidtWeiss:2024:1}.
In \eqref{eq:problem:baseline-tv-based-denoising}, $\bn_\pm$ denote the normal vectors on the two sides of an edge~$\edge$, $\bmu_+$ is a unit conormal vector, and $\abs{\edge}$ denotes edge length.

One motivation to consider problems of type \eqref{eq:general-model}, with a set of preferred normal vectors known in advance, arises from crystallography, where the crystalline structure restricts the constellation of possible normal vectors \cite{HsuSadighBertinParkChapmanBulatovZhou:2024:1,FritzenBoehlkeSchnack:2008:1}.
Another application domain where preferred normal vectors are available is architecture, specifically urban mesh denoising \cite{PadenGarciaSanchezLedoux:2022:1,GargalloPeiroFolchRoca:2016:1}, where, \eg, noisy LiDAR (Light Detection and Ranging) measurements of building surfaces are available and require denoising.
We will discuss an example for the latter in~\cref{subsection:example:platonic-solids} and demonstrate that the proposed model \eqref{eq:general-model} outperforms the baseline denoising model \eqref{eq:problem:baseline-tv-based-denoising} that does not utilize preferred normal vector information.

\subsection*{Outline}

In \cref{section:preliminaries}, we introduce the notation and optimization framework for problem \eqref{eq:general-model}.
\Cref{section:admm-scheme} discusses an alternating direction of multipliers (ADMM) scheme to solve problem \eqref{eq:general-model}, and shows how to address the arising subproblems.
In \cref{section:numerical-examples}, we demonstrate the characteristics and performance of the model using four numerical experiments.

\section{Preliminaries}
\label{section:preliminaries}

\subsection*{Triangulated Surfaces}

We consider geometries represented by triangulated, oriented surfaces~$\mesh$ embedded in~$\R^3$.
We denote the set of cells (triangles) by~$\triangles$, the set of edges by~$\edges$ and the set of vertices by~$\vertices$.
We work with manifold meshes without boundary, so that every edge $\edge$ is connected to exactly two triangles.
The triangles on either side of an edge~$\edge$ are denoted by $\edge_+$ and $\edge_-$, respectively, with arbitrary but fixed orientation.
There exists a global outer unit normal vector field~$\bn$ to the surface~$\mesh$, which is constant on each triangle~$\triangle$.

\subsection*{Function Spaces}

We denote by $P_0(\triangle,\R^n)$ the space of constant functions on a triangle~$\triangle$ with values in $\R^n$.
Moreover, we define the space of piecewise constant functions on~$\mesh$ as
\begin{equation*}
	\DG{\triangles}{\R^n}
	\coloneqq
	\setDef[big]
	{\bu \colon \bigcup_{\triangle \in \triangles} \triangle \to \R^n}
	{\bu_\triangle \in P_0(\triangle,\R^n) \text{ for all } \triangle \in \triangles}
	,
\end{equation*}
where $\bu_\triangle$ denotes the constant value of $\bu$ on triangle~$\triangle$.
The ambiguity of function values on edges will be irrelevant.
We will also use $\DG{\triangles}{\Delta_L}$ to denote the subset of $\DG{\triangles}{\R^L}$ with values in the probability simplex $\Delta_L \subseteq \R^L$.

On the skeleton (the union of edges), we analogously define the space
\begin{equation*}
	\DG{\edges}{\R^n}
	\coloneqq
	\setDef[big]
	{\bu \colon \bigcup_{\edge \in \edges} \edge \to \R^n}
	{\bu_\edge \in P_0(\edge,\R^n) \text{ for all } \edge \in \edges}
	.
\end{equation*}
We emphasize the fact that the values $\bu_\triangle$ for~$\bu\in\DG{\triangles}{\R^n}$ remain unchanged when the geometry is deformed, \ie, when the vertex positions~$\vertexpositions$ of the mesh~$\mesh$ are moving.
The same is true for $\DG{\edges}{\R^n}$.

\subsection*{Total Variation}

The \emph{total variation} of $\assign \in \DG{\triangles}{\R^n}$ is given by
\begin{equation}
	\label{eq:assignment-function:total-variation}
	\TV(\assign)
	\coloneqq
	\sum_{\edge \in \edges}
	\abs{\edge}
	\,
	\abs{\eplus{\assign} - \eminus{\assign}}_1^{}
	,
\end{equation}
where $\abs{\edge}$ is the length of the edge~$\edge$ and $\abs{\,\cdot\,}_1$ denotes the $1$-norm of a vector in~$\R^n$; see \cite{LellmannStrekalovskiyKoetterCremers:2013:1,BergmannHerrmannHerzogSchmidtVidalNunez:2020:2}.

\section{ADMM Scheme}
\label{section:admm-scheme}

We recall from \eqref{eq:general-model} the optimization problem under consideration, \ie,
\begin{equation}
	\label{eq:general-model-repeated}
	\text{Minimize}
	\quad
	\fidelity(\mesh)
	+
	\assignmentWeight \!
	\sum_{\triangle \in \triangles} \abs{\triangle}
	\sum_{\ell=1}^L \doubleindex{\assign} \, \abs{\bn_\triangle - \gl}_2
	+
	\TVweight \TV(\assign)
	.
\end{equation}
The unknowns in \eqref{eq:general-model-repeated} are the vertex positions~$\vertexpositions$ of the triangulated surface~$\mesh$ and the assignment function $\assign \in \DG{\triangles}{\Delta_L}$.
Notice that besides the term~$\fidelity(\mesh)$, the triangle areas~$\abs{\triangle}$, edge lengths~$\abs{\edge}$ as well as the normal vector field~$\bn$ in \eqref{eq:general-model-repeated} all depend on~$\vertexpositions$.

In this section, we formulate an alternating direction method of multipliers (ADMM) scheme for \eqref{eq:general-model-repeated}.
We refer the reader to \cite{BoydParikhChuPeleatoEckstein:2010:1,GoldsteinOsher:2009:1} for a background on ADMM and its application to problems involving the total variation.
The reason for using ADMM is that problem \eqref{eq:general-model-repeated} is non-smooth due to the total-variation term $\TV(\assign)$ and the term $\abs{\bn_\triangle - \gl}_2$, and it also has to honor the simplex constraints $\assign_\triangle \in \Simplex$.
At the expense of the introduction of auxiliary variables, ADMM offers an opportunity to decouple these non-smooth terms from the remaining parts of the objective function, allowing us to decompose \eqref{eq:general-model-repeated} into simpler subproblems.

Specifically, we aim to decouple the non-smoothness arising from the total-variation term $\TV(\assign)$ and the assignment penalty $\abs{\bn_\triangle - \gl}_2$, as well as the simplex constraint of the assignment function $\assign$ from the remaining terms of the objective.
For this purpose, we introduce the following auxiliary variables.
The variable $\varW \in \DG{\triangles}{\Simplex}$ is a copy of the assignment function~$\assign$, \ie, it is subject to the constraints $\varW_\triangle = \assign_\triangle$ in~$\R^L$.
However, in contrast to~$\assign$, the simplex constraint $\varW_\triangle \in \Simplex$ will be strictly enforced.
The second auxiliary variable $\varU \in \DG{\triangles}{\R^{L\times 3}}$ is introduced to decouple  the normal vector $\bn_\triangle$ from the non-smooth term $\abs{\bn_\triangle - \gl}_2$.
It is subject to the constraints $\doubleindex{\varU} = \bn_\triangle - \gl$ in~$\R^3$.
Analogously, $\varV \in \DG{\edges}{\R^L}$ is used to decouple the total-variation term from the assignment function~$\assign$.
It is subject to the constraints $\varV_\edge = \eplus{\assign} - \eminus{\assign}$ in~$\R^L$.

Substituting the auxiliary variables~$\varU, \varV, \varW$ into \eqref{eq:general-model-repeated}, we arrive at the augmented problem
\begin{equation}
	\label{eq:general-model-repeated:expanded}
	\begin{aligned}
		\underset{\vertexpositions, \assign, \varU, \varV, \varW}{\text{Minimize}}
		\quad
		&
		\fidelity(\mesh)
		+
		\assignmentWeight \!
		\sum_{\triangle \in \triangles}
		\abs{\triangle}
		\sum_{\ell=1}^L
		\doubleindex{\assign} \, \abs{\doubleindex{\varU}}_2
		+
		\TVWeight \!
		\sum_{\edge \in \edges}
		\abs{\edge}
		\,
		\abs{\varV_\edge}_1
		+
		\sum_{\triangle \in \triangles}
		\characteristic{\Delta_L}(\varW_\triangle)
		\\
		\text{\st}
		\quad
		&
		\paren[auto]\{.{%
			\begin{aligned}
				\doubleindex{\varU}
				&
				=
				\bn_\triangle - \gl \in \R^3
				\quad
				\text{for all }
				\triangle \in \triangles
				\text{ and }
				\ell = 1, \ldots, L
				,
				\\
				\varV_\edge
				&
				=
				\eplus{\assign} - \eminus{\assign} \in \R^L
				\quad
				\text{for all }
				\edge \in \edges
				,
				\\
				\varW_\triangle
				&
				=
				\assign_\triangle \in \R^L
				\quad
				\text{for all }
				\triangle \in \triangles
				.
			\end{aligned}
		}
	\end{aligned}
\end{equation}
Note that we formulate the simplex constraint for $\varW$ in the form of the characteristic function $\characteristic{\Delta_L}$ with values in $\set{0,\infty}$, and we are going to strictly enforce this constraint in every iteration.
We introduce Lagrange multipliers $\dualsimilarity \in \DG{\triangles}{\R^{L\times 3}}$, $\dualjumps \in \DG{\edges}{\R^L}$ and $\dualphiw \in \DG{\triangles}{\R^L}$ for the remaining constraints in \eqref{eq:general-model-repeated:expanded}.

We associate with problem \eqref{eq:general-model-repeated:expanded} the augmented Lagrangian
\begin{align}
	\MoveEqLeft
	\cL_\augm\paren[auto](){\vertexpositions, \assign, \varU, \varV, \varW, \dualsimilarity, \dualjumps, \dualphiw}
	\notag
	\\
	&
	=
	\fidelity(\mesh)
	+
	\assignmentWeight \!
	\sum_{\triangle \in \triangles}
	\abs{\triangle}
	\sum_{\ell=1}^L
	\doubleindex{\assign}
	\,
	\abs{\doubleindex{\varU}
	}_2
	+
	\TVWeight \!
	\sum_{\edge \in \edges}
	\abs{\edge}
	\,
	\abs{\varV_\edge}_1
	+
	\sum_{\triangle \in \triangles}
	\characteristic{\Delta_L}(\varW_\triangle)
	\notag
	\\
	&
	\quad
	+
	\frac{\augm_1}{2}
	\sum_{\triangle \in \triangles}
	\abs{\triangle}
	\sum_{\ell=1}^L
	\abs{\doubleindex{\bn} - \gl - \doubleindex{\varU} + \doubleindex{\dualsimilarity}}_2^2
	\notag
	\\
	&
	\quad
	+
	\frac{\augm_2}{2}
	\sum_{\edge \in \edges}
	\abs{\edge}
	\,
	\abs{\eplus{\assign} - \eminus{\assign} - \varV + \dualjumps_\edge}_2^2
	+
	\frac{\augm_3}{2}
	\sum_{\triangle \in \triangles}
	\abs{\triangle}
	\,
	\abs{\assign_\triangle-\varW_\triangle + \dualphiw_\triangle}_2^2
	.
	\label{eq:augmented-Lagrangian:problem}
\end{align}
The augmentation parameters~$\augm_i > 0$ for $i = 1, 2, 3$ can be chosen separately for each constraint.
In the ADMM scheme, we take turns minimizing the augmented Lagrangian \eqref{eq:augmented-Lagrangian:problem} with respect to one of the primal variables while fixing the others.
We choose to update the variables in the order $\varU, \varV, \varW, \assign, \vertexpositions$.
This exploits the structure of the problem, as the $\varU, \varV, \varW$-subproblems are independent of each other.

In the following subsections, we address how to tackle each subproblem.
While the respective optimization variable does not carry an iteration index, all data to a subproblem will be indexed by $\sequence{\cdot}{k}$---the iteration index of the ADMM scheme---or by $\sequence{\cdot}{k+1}$ in case its update precedes the current subproblem.

\subsection{The \texorpdfstring{$\varU$}{u}-Problem}
\label{subsection:admm:u-problem}

The minimization of \eqref{eq:augmented-Lagrangian:problem} \wrt the variable $\varU \in \DG{\triangles}{\R^{L\times 3}}$ decouples into independent subproblems on each triangle~$\triangle$ and label~$\ell$.
Omitting both the terms from \eqref{eq:augmented-Lagrangian:problem} that do not depend on $\varU$, and canceling the common factor $\abs{\triangle}$, we obtain
\begin{equation}
	\label{eq:admm:u-problem}
	\underset{\doubleindex{\varU} \in \R^3}{\text{Minimize}}
	\quad
	\assignmentWeight
	\,
	\sequence{\doubleindex{\assign}}{k}
	\,
	\abs{\doubleindex{\varU}}_2
	+
	\frac{\augm_1}{2}
	\abs[big]{%
		\doubleindex{\bn}
		-
		\gl
		-
		\doubleindex{\varU}
		+
		\sequence{\doubleindex{\dualsimilarity}}{k}
	}_2^2
	.
\end{equation}
Note that in contrast to most occurrences of this problem in the literature, the coefficient $\assignmentWeight \, \sequence{\doubleindex{\assign}}{k}$ may be negative since the simplex constraint for~$\assign$ is not enforced strictly.
Therefore, problem \eqref{eq:admm:u-problem} may be non-convex.
Nevertheless, global solutions still exist and can be characterized easily, as shown in the following lemma.
We mention that \cite[Lemma~1]{LouYan:2017:1} addresses a related result pertaining to problems involving an additional $\abs{\,\cdot \,}_1$-norm.
\begin{lemma}
	\label{lemma:non-convex-soft-thresholding}
	Suppose that $\gamma \in \R$ and $\bc \in \R^n$, $n \in \N$, are given.
	Then the global solutions $\varU^*$ of
	\begin{equation*}
		\underset{\varU \in \R^n}{\text{Minimize}}
		\quad
		f(\varU)
		\coloneqq
		\gamma
		\,
		\abs{\varU}_2
		+
		\frac{1}{2}
		\abs{\varU-\bc}_2^2
	\end{equation*}
	are given by the soft-thresholding operation
	\begin{equation}
		\label{eq:non-convex-soft-thresholding}
		\varU^*
		=
		\max \set{0, \abs{\bc}_2 - \gamma}
		\cdot
		\begin{cases}
			\frac{\bc}{\abs{\bc}_2}
			&
			\text{if }
			\bc \neq \bnull
			,
			\\
			\be
			&
			\text{if }
			\bc = \bnull
			,
		\end{cases}
	\end{equation}
	where $\be \in \R^n$ is an arbitrary vector of length $\abs{\be}_2 = 1$.
\end{lemma}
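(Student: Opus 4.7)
The plan is to reduce the problem to a one-dimensional minimization by decomposing $\varU$ into its magnitude and direction. Write $\varU = r \bv$ with $r = \abs{\varU}_2 \ge 0$ and, when $r>0$, $\bv \in \R^n$ a unit vector (if $r=0$, the direction is irrelevant). Expanding the squared Euclidean norm gives
\begin{equation*}
    f(\varU)
    =
    \gamma \, r
    +
    \tfrac{1}{2} r^2
    -
    r \, \bv^\transp \bc
    +
    \tfrac{1}{2} \abs{\bc}_2^2
    .
\end{equation*}
For any fixed $r > 0$, only the term $-r\,\bv^\transp \bc$ depends on $\bv$. Minimizing this over the unit sphere via Cauchy--Schwarz yields, if $\bc \neq \bnull$, the unique optimal direction $\bv^* = \bc/\abs{\bc}_2$ with minimal value $-r\,\abs{\bc}_2$; if $\bc = \bnull$, any unit vector is optimal with value $0$.

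With this reduction, it remains to solve the univariate problem
\begin{equation*}
    \underset{r \ge 0}{\text{Minimize}}
    \quad
    g(r)
    =
    \tfrac{1}{2} r^2
    +
    (\gamma - \abs{\bc}_2) \, r
    +
    \tfrac{1}{2}\abs{\bc}_2^2
    ,
\end{equation*}
which is a strictly convex quadratic in $r$, irrespective of the sign of $\gamma$. Its unconstrained minimizer is $r = \abs{\bc}_2 - \gamma$, so the solution on $[0,\infty)$ is $r^* = \max\set{0,\abs{\bc}_2 - \gamma}$. Combining this with the direction obtained in the previous step yields precisely the formula \eqref{eq:non-convex-soft-thresholding}, both in the standard case $\bc \neq \bnull$ and in the degenerate case $\bc = \bnull$ (where $\abs{\bc}_2 - \gamma = -\gamma$ and any unit vector $\be$ provides a valid minimizer).

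The main obstacle is the potential non-convexity of $f$ when $\gamma < 0$, which in principle forbids a direct appeal to first-order optimality or to standard proximal-operator results. The polar decomposition circumvents this obstacle entirely: the angular subproblem is solved in closed form by Cauchy--Schwarz without any convexity assumption, while the remaining radial subproblem is always a one-dimensional convex quadratic on $[0,\infty)$. A brief remark should also justify existence of a global minimizer (coercivity of $f$ since the quadratic term dominates the linear $\gamma \abs{\varU}_2$ term as $\abs{\varU}_2 \to \infty$) before the construction, so that the formula genuinely produces a global minimum rather than merely a stationary point.
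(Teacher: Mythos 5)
Your proof is correct, but it takes a genuinely different route from the paper's. The paper argues via first-order stationarity: it observes that any nonzero local minimizer must satisfy $\bnull = \gamma \varU / \abs{\varU}_2 + (\varU - \bc)$, deduces $\varU = t\,\bc$, and then runs a three-way case distinction on $\gamma$ relative to $\pm\abs{\bc}_2$, comparing objective values at the resulting candidates (including the extra negative-$t$ critical point that appears when $\gamma < -\abs{\bc}_2$) to identify the global minimizer; the case $\bc = \bnull$ is treated separately by radial symmetry. You instead use a polar decomposition $\varU = r\bv$, solve the angular subproblem exactly by Cauchy--Schwarz (which requires no convexity and pins down the direction $\bv^* = \bc/\abs{\bc}_2$ uniquely whenever $r>0$ and $\bc \neq \bnull$), and are left with a radial problem $g(r) = \tfrac12 r^2 + (\gamma - \abs{\bc}_2)\,r + \tfrac12\abs{\bc}_2^2$ over $r \ge 0$ that is a convex quadratic \emph{regardless} of the sign of $\gamma$. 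This buys you a uniform treatment: the non-convexity that forces the paper into its case analysis (and into explicitly ruling out the spurious critical point $t_2\bc$) simply disappears, and the cases $\bc = \bnull$ and $\bc \neq \bnull$ are handled by the same one-dimensional computation. The paper's approach, by contrast, stays closer to the standard proximal-operator derivation and makes the structure of the critical-point set explicit, which is mildly informative but not needed for the statement. Your closing remark on coercivity is fine but in fact superfluous, since your reduction computes the infimum and its attainment explicitly. One small point worth making explicit in a write-up: the reduction $\min_\varU f = \min_{r\ge 0} g(r)$ is valid at $r=0$ as well, because $g(0) = \tfrac12\abs{\bc}_2^2 = f(\bnull)$, so no minimizers are lost or gained at the origin where the direction $\bv$ is undefined.
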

\begin{proof}
	In case $\gamma \ge 0$, the result is well-known; see, \eg, \cite[Example~2.16]{CombettesWajs:2005:1}.
	We provide a short proof that is valid for any $\gamma \in \R$.

	We begin with the case $\bc \neq \bnull$.
	Suppose that $\varU \neq \bnull$ is a local minimizer, then the first-order optimality condition
	\begin{equation}
		\label{eq:non-convex-soft-thresholding:first-order-optimality}
		\bnull
		=
		\gamma \, \frac{\varU}{\abs{\varU}_2}
		+
		\paren(){\varU-\bc}
	\end{equation}
	holds.
	This clearly implies $\varU = t \, \bc$ for some $t \in \R \setminus \set{0}$ and leads to the condition
	\begin{equation}
		\label{eq:non-convex-soft-thresholding:first-order-optimality:reduced}
		\bnull
		=
		\gamma \, \frac{\bc}{\abs{\bc}_2} \sgn(t)
		+
		(t - 1) \, \bc
		.
	\end{equation}
	\begin{enumerate}
		\item
			In case $\gamma \ge \abs{\bc}_2$, \eqref{eq:non-convex-soft-thresholding:first-order-optimality:reduced} cannot be satisfied for any $t \neq 0$.
			Consequently, no $\varU \neq \bnull$ can be a local minimizer, and thus $\varU^* = \bnull$ is the unique global minimizer.

		\item
			In case $-\abs{\bc}_2 \le \gamma < \abs{\bc}_2$, \eqref{eq:non-convex-soft-thresholding:first-order-optimality:reduced} is satisfied for the unique value $t_1 = 1 - \frac{\gamma}{\abs{\bc}_2} > 0$.
			Hence, the only candidates for global minimizers are $\varU = \bnull$ and $\varU = t_1 \bc$.
			We compare their objective values:
			\begin{align*}
				f(\bnull)
				&
				=
				\frac{1}{2} \abs{\bc}_2^2
				,
				\\
				f(t_1\bc)
				&
				=
				\gamma \, t_1 \abs{\bc}_2
				+
				\frac{1}{2} (t_1 - 1)^2 \, \abs{\bc}_2^2
				=
				\frac{1}{2} \abs{\bc}_2^2
				-
				\frac{1}{2}
				\paren(){\abs{\bc} - \gamma}^2
				<
				f(\bnull)
				.
			\end{align*}
			Therefore, $\varU^* = t_1\bc$ is the unique global minimizer in this case.

		\item
			In case $\gamma < - \abs{\bc}_2$, \eqref{eq:non-convex-soft-thresholding:first-order-optimality:reduced} is satisfied for the two distinct values $t_2 = 1 + \frac{\gamma}{\abs{\bc}_2} < 0$ and $t_1 > 0$ as above.
			It is easy to see that $f(t \, \bc) < f(-t \, \bc)$ holds for any $t > 0$, hence $t_2$ cannot be a global minimizer.
			Comparing $f(\bnull)$ and $f(t_1 \, \bc)$ as above, we conclude that $f(t_1 \, \bc) < f(\bnull)$ also holds in this case, and thus $\varU^* = t_1 \, \bc$ is the unique global minimizer.
	\end{enumerate}
	The considerations so far confirm \eqref{eq:non-convex-soft-thresholding} in case $\bc \neq \bnull$.
	Now consider the case $\bc = \bnull$, so $f$ is radially symmetric.
	Suppose that $\be$ is an arbitrary unit vector in $\R^n$, then $g(t) \coloneqq f(t \, \be) = \gamma \, \abs{t} + \frac{1}{2} t^2$ holds for any $t \in \R$.
	A simple distinction of cases shows that $t^* = 0$ is the unique global minimizer of~$g$ if $\gamma \ge 0$, while $t^* = \pm \gamma$ are two distinct global minimizers if $\gamma < 0$.
	Since $\be$ was chosen arbitrarily, this shows that the global minimizers of~$f$ are precisely the vectors of the form $\varU^* = -\gamma \, \be$, which confirms \eqref{eq:non-convex-soft-thresholding} also in case $\bc = \bnull$.
\end{proof}

We can now apply \cref{lemma:non-convex-soft-thresholding} to problem \eqref{eq:admm:u-problem} and obtain the update rule
\begin{equation}
	\label{eq:ADMM:u-update}
	\sequence{\doubleindex{\varU}}{k+1}
	=
	\max \set[Big]{0, \abs{\bc}_2 - \frac{\assignmentWeight \, \doubleindex{\assign}}{\augm_1}}
	\cdot
	\begin{cases}
		\frac{\bc}{\abs{\bc}_2}
		&
		\text{if }
		\bc \neq \bnull
		,
		\\
		\be
		&
		\text{if }
		\bc = \bnull
	\end{cases}
\end{equation}
with $\bc \coloneqq \doubleindex{\bn} - \gl + \sequence{\doubleindex{\dualsimilarity}}{k}$ and $\be \in \R^3$ an arbitrary vector of length~$\abs{\be}_2 = 1$.

\subsection{The \texorpdfstring{$\varV$}{v}-Problem}
\label{subsection:admm:v-problem}

The minimization of \eqref{eq:augmented-Lagrangian:problem} \wrt the variable~$\varV \in \DG{\edges}{\R^L}$ decouples into independent subproblems on each edge~$\edge$ and each component (label)~$\ell$.
Omitting the terms that do not depend on~$\varV$ from \eqref{eq:augmented-Lagrangian:problem}, and canceling the common factor~$\abs{\edge}$, the problem on a single edge~$\edge$ and label~$\ell$ reads
\begin{equation}
	\label{eq:admm:v-problem}
	\underset{\doubleindex{\varV}[\edge][\ell] \in \R}{\text{Minimize}}
	\quad
	\TVWeight
	\,
	\abs{\doubleindex{\varV}[\edge][\ell]}
	+
	\frac{\augm_2}{2}
	\paren[big](){%
		\sequence{\elplus{\assign}}{k}
		-
		\sequence{\elminus{\assign}}{k}
		-
		\doubleindex{\varV}[\edge][\ell]
		+
		\sequence{\doubleindex{\dualjumps}[\edge][\ell]}{k}
	}^2
	.
\end{equation}
We can once again apply \cref{lemma:non-convex-soft-thresholding} and obtain
\begin{equation}
	\label{eq:ADMM:v-update}
	\sequence{\doubleindex{\varV}[\edge][\ell]}{k+1}
	=
	\max \set[Big]{%
		\abs[big]{\sequence{\elplus{\assign}}{k} - \sequence{\elminus{\assign}}{k} + \sequence{\doubleindex{\dualjumps}[\edge][\ell]}{k}}
		-
		\frac{\TVWeight}{\augm_2}
		,
		\;
		0
	}
	.
\end{equation}

\subsection{The \texorpdfstring{$\varW$}{w}-Problem}
\label{subsection:admm:w-problem}

The minimization of \eqref{eq:augmented-Lagrangian:problem} \wrt the variable $\varW \in \DG{\triangles}{\Simplex}$ decouples into independent subproblems on each triangle~$\triangle$.
Omitting the terms that do not depend on~$\varW$, and canceling the common factor~$\abs{\triangle}$, the problem on a single triangle~$\triangle$ reads
\begin{equation}
	\underset{\varW_\triangle \in \R^L}{\text{Minimize}}
	\quad
	\characteristic{\Delta_L}(\varW_\triangle)
	+
	\frac{\augm_3}{2}
	\abs[big]{%
		\sequence{\assign_\triangle}{k}
		-
		\varW_\triangle
		+
		\sequence{\dualphiw_\triangle}{k}
	}_2^2
	.
\end{equation}
This is an orthogonal projection problem onto the unit simplex~$\Delta_L$ \wrt the Euclidean inner product, \ie,
\begin{equation}
	\label{eq:ADMM:w-update}
	\sequence{\varW_\triangle}{k+1}
	=
	\proj_{\Delta_L}\paren[big](){%
		\sequence{\assign_\triangle}{k}
		+
		\sequence{\dualphiw_\triangle}{k}
	}
\end{equation}
holds.
It can be solved efficiently, \eg, using \cite[Algorithm~1]{WangCarreiraPerpinan:2013:1}.
The main idea of that algorithm is to consider variables of the form $\sequence{\doubleindex{\varW}}{k+1} = \max \set{0, \, \eta + \sequence{\doubleindex{\assign}}{k} + \sequence{\doubleindex{\dualphiw}}{k}}$ with a suitable $\eta \in \R$.
The optimal value of the shift parameter~$\eta$ can be found by sorting the entries of the vector $\sequence{\assign_\triangle}{k} + \sequence{\dualphiw_\triangle}{k}$.

\subsection{The \texorpdfstring{$\assign$}{phi}-Problem}
\label{subsection:admm:phi-problem}

The minimization of \eqref{eq:augmented-Lagrangian:problem} \wrt the variable $\assign \in \DG{\triangles}{\Delta_L}$ leads to the problem
\begin{multline}
	\label{eq:ADMM:phi-update}
	\underset{\assign \in \DG{\triangles}{\R^L}}{\text{Minimize}}
	\quad
	\assignmentWeight \!
	\sum_{\triangle \in \triangles}
	\abs{\triangle}
	\sum_{\ell=1}^L
	\doubleindex{\assign}
	\,
	\abs[big]{\sequence{\doubleindex{\varU}}{k+1}}_2
	\\
	+
	\frac{\augm_2}{2}
	\sum_{\edge \in \edges}
	\abs{\edge}
	\,
	\abs[big]{%
		\eplus{\assign}
		-
		\eminus{\assign}
		-
		\sequence{\varV}{k+1}
		+
		\sequence{\dualjumps_\edge}{k}
	}_2^2
	+
	\frac{\augm_3}{2}
	\sum_{\triangle \in \triangles}
	\abs{\triangle}
	\,
	\abs[big]{%
		\assign_\triangle
		-
		\sequence{\varW_\triangle}{k+1}
		+
		\sequence{\dualphiw_\triangle}{k}
	}_2^2
	,
\end{multline}
where again we omitted all terms from \eqref{eq:augmented-Lagrangian:problem} that do not depend on~$\assign$.
This is a smooth problem that decouples \wrt the components $\assign_{\cdot,\ell}$ but remains spatially coupled.
The coupling is sparse, however, and it reflects the triangle connectivity structure across edges.
Since the objective in \eqref{eq:ADMM:phi-update} is a strongly convex quadratic polynomial with sparse Hessian, we solve the resulting linear system using the \petsc implementation \cite{DalcinPazKlerCosimo:2011:1,BalayAbhyankarAdamsBrownBruneBuschelmanConstantinescuDenerFaibussowitschGroppIsaacKaushikKnepleyKongMcInnesMunsonRuppSananSarichSmithZhangZhangBensonSuhDalcinEijkhoutHaplaJolivetKarpeevKrugerMayMitchellRomanZampiniMillsZhang:2024:1} of the conjugate gradient (CG) method, starting from an all-zero initial guess.
As stopping criterion, we use a relative tolerance $\rtol = 10^{-2}$.
All other settings were left at their default values.

\subsection{The Shape Optimization Problem}
\label{subsection:admm:shape-optimization-problem}

The final optimization step in each ADMM iteration is to update the vertex positions of the mesh~$\mesh$.
Provided that the mesh remains non-degenerate (\ie, a manifold), which we monitor throughout the iterations, the augmented Lagrangian~\eqref{eq:augmented-Lagrangian:problem} depends smoothly on the vertex coordinates~$\vertexpositions$.
Nevertheless, this is the most complex subproblem in the ADMM scheme.
Notice that besides the fidelity term $\fidelity(\mesh)$,
the triangle areas~$\abs{\triangle}$, the edge lengths~$\abs{\edge}$ as well as the normal vector field~$\bn$ all depend nonlinearly on the vertex coordinates~$\vertexpositions$.
Therefore, all terms in \eqref{eq:augmented-Lagrangian:problem} need to be considered in the minimization \wrt the mesh coordinates~$\vertexpositions$.

In fact, the minimization of \eqref{eq:augmented-Lagrangian:problem} \wrt~$\vertexpositions$ can be considered a discretized shape optimization problem.
We treat it as such and use a globalized shape Newton scheme for its approximate solution.
Any motion of the vertices generates a piecewise linear, continuous deformation field $\trialfunction \in \CG{\mesh}{\R^3}$.
In each iteration, we find the deformation field $\trialfunction \in \CG{\mesh}{\R^3}$ from the shape Newton system
\begin{multline}
	\label{eq:admm:shape-optimization-problem:Newton-equation}
	\d^2 \cL_\augm
	\paren[big](){%
		\vertexpositions
		,
		\sequence{\assign}{k+1}
		,
		\sequence{\varU}{k+1}
		,
		\sequence{\varV}{k+1}
		,
		\sequence{\varW}{k+1}
		,
		\sequence{\dualsimilarity}{k}
		,
		\sequence{\dualjumps}{k}
		,
		\sequence{\dualphiw}{k}
	}
	\paren{[}{]}{\trialfunction,\testfunction}
	\\
	=
	-
	\d \cL_\augm
	\paren[big](){%
		\vertexpositions
		,
		\sequence{\assign}{k+1}
		,
		\sequence{\varU}{k+1}
		,
		\sequence{\varV}{k+1}
		,
		\sequence{\varW}{k+1}
		,
		\sequence{\dualsimilarity}{k}
		,
		\sequence{\dualjumps}{k}
		,
		\sequence{\dualphiw}{k}
	}
	\paren{[}{]}{\testfunction}
	,
\end{multline}
where $\testfunction \in \CG{\mesh}{\R^3}$ is an arbitrary test function.
Similarly to~\eqref{eq:ADMM:phi-update}, we solve \cref{eq:admm:shape-optimization-problem:Newton-equation} using the \petsc implementation of the CG method.
We note that the Newton matrix in \eqref{eq:admm:shape-optimization-problem:Newton-equation} is not necessarily positive definite.
Therefore, the (truncated) CG iteration stops in case a direction of negative curvature is encountered, as described in \cite[Chapter~7.1]{NocedalWright:2006:1}.
Further, we use a preconditioner realized by an incomplete Cholesky decomposition of the representation matrix of the inner product
\begin{equation}
	\label{eq:deformation-field:inner-product}
	\inner{\trialfunction}{\testfunction}_{\mesh}
	=
	\int_\mesh
	\trialfunction\cdot\testfunction
	\,
	\d \bx
	+
	c
	\int_\mesh
	D_\mesh \trialfunction
	\dprod
	D_\mesh \testfunction
	\,
	\d \bx
\end{equation}
with a suitable parameter $c > 0$.
The value for~$c$ is specified with each numerical experiment in \cref{section:numerical-examples}.
The CG iterations continue until either the residual norm is small enough, or a search direction of negative curvature is detected.

The approximate shape Newton direction~$\trialfunction$ is then compared to the shape gradient direction \wrt the inner product \eqref{eq:deformation-field:inner-product}, and the shape gradient is used as a fallback direction.
Finally, an Armijo line search procedure is applied to find a suitable step size~$t > 0$ in the vertex position update $\sequence{\vertexpositions}{k+1} \gets \sequence{\vertexpositions}{k} + t \, \trialfunction$.
For details, we refer the reader to \cite[Algorithm~4.3]{BaumgaertnerBergmannHerzogSchmidtVidalNunezWeiss:2025:1}.

\subsection{Overall ADMM Scheme}
\label{subsection:admm:overall-scheme}

The overall ADMM scheme for the solution of \eqref{eq:general-model-repeated} by way of its augmented variant \eqref{eq:general-model-repeated:expanded} is summarized as \cref{algorithm:ADMM}.
We consider the algorithm converged if the absolute change in all eight variables is small between two successive iterations.

\begin{algorithm}[htb]
	\caption{ADMM for problem \eqref{eq:general-model-repeated} with fidelity term \eqref{eq:fidelity-term}.}
	\label{algorithm:ADMM}
	\begin{algorithmic}[1]
		\Require
		preferred normal vectors (labels) $\labels_1, \ldots, \labels_L \in \Sphere$
		\Require
		vertex positions $\vertexdata \in \CG{\mesh}{\R^3}$ for the fidelity term \eqref{eq:fidelity-term}
		\Require
		assignment parameter $\assignmentWeight > 0$,
		TV penalty parameter~$\TVweight > 0$
		\Require
		mesh~${\mesh}$ with initial vertex positions~$\sequence{\vertexpositions}{0}$
		\Require
		initial assignment function
		$\sequence{\assign}{0} \in \DG{\triangles}{\R^L}$
		\Require
		initial multiplier estimates
		$\sequence{\dualsimilarity}{0}$,
		$\sequence{\dualjumps}{0}$,
		$\sequence{\dualphiw}{0}$
		\Require
		augmentation parameter~$\augm_1,\augm_2,\augm_3 > 0$,
		inner product parameter~$c > 0$
		\Ensure
		approximate solution of \eqref{eq:general-model-repeated}, respectively \eqref{eq:general-model-repeated:expanded}
		\State{Set $k \coloneqq 0$}
		\While{not converged}
		\State{Set $\sequence{\varU}{k+1}$ using \eqref{eq:ADMM:u-update}}
		\label{step:admm:u-update}
		\State{Set $\sequence{\varV}{k+1}$ using \eqref{eq:ADMM:v-update}}
		\label{step:admm:v-update}
		\State{Set $\sequence{\varW}{k+1}$ using \eqref{eq:ADMM:w-update} }
		\label{step:admm:w-update}
		\State Find an approximate minimizer $\sequence{\assign}{k+1}$ of \eqref{eq:ADMM:phi-update}; see \cref{subsection:admm:phi-problem}
		\label{step:admm:phi-update}
		\State Find an approximate minimizer $\sequence{\vertexpositions}{k+1}$ of \eqref{eq:augmented-Lagrangian:problem}; see \cref{subsection:admm:shape-optimization-problem}
		\label{step:admm:shape-update}
		\Statex Update the Lagrange multipliers for all $\triangle \in \triangles$, $\edge \in \edges$ and $\ell = 1, \ldots, L$:
		\State
		\begin{math}
			\sequence{\doubleindex{\dualsimilarity}}{k+1}
			\gets
			\sequence{\doubleindex{\dualsimilarity}}{k}
			+
			\sequence{\bn}{k+1}
			-
			\gl
		\end{math}
		\label{step:admm:dualsimilarity-update}
		\State
		\begin{math}
			\sequence{\dualphiw_\triangle}{k+1}
			\gets
			\sequence{\dualphiw_\triangle}{k}
			+
			\sequence{\assign_\triangle}{k+1}
			-
			\sequence{\varW_\triangle}{k+1}
		\end{math}
		\label{step:admm:dualphiw-update}
		\State
		\begin{math}
			\sequence{\dualjumps_\edge}{k+1}
			\gets
			\sequence{\dualjumps_\edge}{k}
			+
			\sequence{\eplus{\assign}}{k+1}
			-
			\sequence{\eminus{\assign}}{k+1}
			-
			\sequence{\varV_\edge}{k+1}
		\end{math}
		\State{Set $k\coloneqq k+1$}
		\EndWhile
	\end{algorithmic}
\end{algorithm}

\section{Numerical Experiments}
\label{section:numerical-examples}

In this section, we present four numerical experiments using the proposed problem \eqref{eq:general-model-repeated}.
Each experiment features a distinct initial geometry (represented by a surface mesh~$\mesh$) and label set of preferred normal vectors and is designed to showcase a different aspect.

The first example (\cref{subsection:example:sphere}) clarifies the effect of the parameters $\assignmentWeight$ (assignment weight) and $\TVweight$ (TV weight).
We therefore use a sphere with noisy vertex positions as a simple model geometry.
In the second example (\cref{subsection:example:platonic-solids}), we show that the model is capable of affecting large deformations of the initial geometry, provided that the assignment weight~$\assignmentWeight$ is sufficiently large.
To demonstrate this, we deform a sphere mesh into one of several platonic solids, which feature only a few distinct preferred normal vectors.
In the third example (\cref{subsection:example:city-skyline}), we consider denoising a noisy, synthetic city skyline scan, illustrating how knowledge of preferred normal vectors can substantially enhance the reconstruction quality.
We use problem \eqref{eq:general-model-repeated} for an artistic purpose and apply it to the Stanford bunny (\cref{subsection:example:stanford-bunny}) to achieve a rough wood carving effect.
In the final example (\cref{subsection:example:inscription}), we apply the proposed model \eqref{eq:general-model-repeated} to a 3D scan of an inscription on a gravestone.
Knowledge about the limited number of facet angles occurring in the inscription can help decipher the text, which is partially eroded and covered by moss.

The fidelity and regularization terms
\begin{equation}
	\label{eq:fidelity-term}
	\fidelity(\vertexpositions)
	=
	\fidelity_1(\vertexpositions;\vertexdata)
	+
	\meshqualityweight \,
	\fidelity_2(\vertexpositions)
\end{equation}
are used in all examples, where
\begin{equation}
	\fidelity_1(\vertexpositions;\vertexdata)
	=
	\sum_{\vertex \in \vertices}
	\abs{\vertexpositions_\vertex-\vertexdata_\vertex}_2^2
	\quad
	\text{and}
	\quad
	\fidelity_2(\vertexpositions)
	=
	\sum_{\triangle \in \triangles}
	\frac{1}{\abs{\triangle}}
	.
\end{equation}
Here $\vertexdata_\vertex \in \R^3$ denotes given, generally noisy data for each vertex $\vertex \in \vertices$.
Simultaneously, $\vertexdata$ serves as the initial vertex positions~$\sequence{\vertexpositions}{0}$ in \cref{algorithm:ADMM}.
The second addend in \eqref{eq:fidelity-term} is a rudimentary mesh quality term that prevents triangles from becoming too small.

All experiments were implemented in \fenics version~2019.1.0 \cite{AlnaesBlechtaHakeJohanssonKehletLoggRichardsonRingRognesWells:2015:1}.

\subsection{Sphere Example}
\label{subsection:example:sphere}

This example serves to study the effect of the assignment weight~$\assignmentWeight$ and the TV weight~$\TVweight$ on the solution of problem \eqref{eq:general-model-repeated}.
We use a discretization of the unit sphere~$\Sphere \coloneqq \setDef{\bx \in \R^3}{\abs{\bx}_2 = 1}$ obtained by \mshr (the built-in \fenics mesh generation library) into \num{2010}~triangles and \num{1007}~vertices as a simple model geometry; see \cref{figure:sphere:noisy-data}.
We then add Gaussian noise to the vertex positions $\vertexdata$.
The vertex-dependent variance is chosen to be $\sigma^2 = 0.01 \, e^2$, where $e$ denotes the average length of the incident edges.
Further, we choose $L = 20$~preferred normal vectors~$\gl$ distributed uniformly around the sphere by means of the Fibonacci lattice; see \cite{Gonzalez:2009:1}.
The model and algorithmic parameters are shown in \cref{table:sphere:parameters}.

\begin{table*}[htb]
  \centering
  \begin{tabular}{lSSSSS}
		\toprule
    &
		{\mrep[r]{\cref{figure:sphere:alpha-small-beta-small,figure:sphere:alpha-small-beta-medium,figure:sphere:alpha-small-beta-large}}{\ref{figure:sphere:alpha-small-beta-small}}}
    &
		{\labelcref{figure:sphere:alpha-medium-beta-small,figure:sphere:alpha-medium-beta-medium,figure:sphere:alpha-medium-beta-large}}
    &
    {\labelcref{figure:sphere:alpha-large-beta-small}}
    &
    {\labelcref{figure:sphere:alpha-large-beta-medium}}
    &
    {\labelcref{figure:sphere:alpha-large-beta-large}}
		\\
		\midrule
    mesh quality weight~$\meshqualityweight$ \eqref{eq:fidelity-term}
		&
    \num{e-6}
    &
    \num{e-6}
    &
    \num{e-6}
    &
    \num{e-6}
    &
    \num{e-6}
		\\
    augmentation parameter~$\augm_1$ \eqref{eq:augmented-Lagrangian:problem}
		&
    \num{0.2}
    &
    \num{2}
    &
    \num{2}
    &
    \num{2}
    &
    \num{10}
		\\
    augmentation parameter~$\augm_2$ \eqref{eq:augmented-Lagrangian:problem}
		&
    \num{0.2}
    &
    \num{2}
    &
    \num{2}
    &
    \num{2}
    &
    \num{10}
		\\
    augmentation parameter~$\augm_3$ \eqref{eq:augmented-Lagrangian:problem}
		&
    \num{0.2}
    &
    \num{2}
    &
    \num{2}
    &
    \num{2}
    &
    \num{10}
		\\
    inner product parameter~$c$ \eqref{eq:deformation-field:inner-product}
    &
    \num{0.1}
    &
    \num{0.1}
    &
    \num{0.1}
    &
    \num{0.1}
    &
    \num{0.1}
    \\
		\bottomrule
  \end{tabular}
	\caption{%
		Parameters for the sphere example (\cref{figure:sphere}).
		The values of the assignment weight~$\assignmentWeight$ and total-variation weight~$\TVweight$ parameters \eqref{eq:general-model-repeated} are given in the caption of \cref{figure:sphere}.
	}
  \label{table:sphere:parameters}
\end{table*}

\begin{figure}[htb]
	\centering
	\includegraphics[width = 0.5\textwidth]{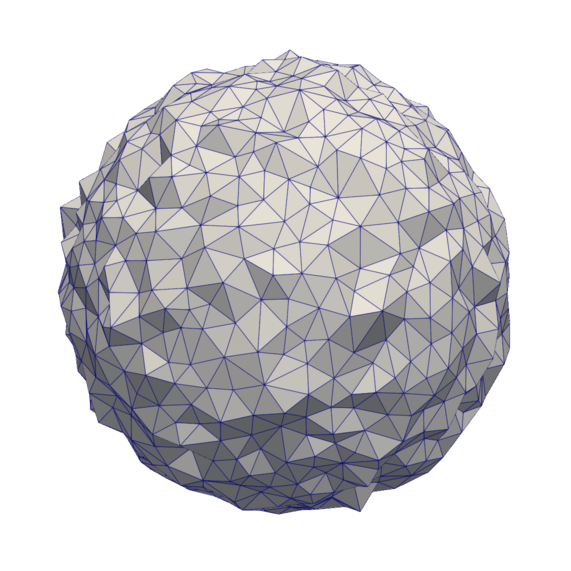}
	\caption{Noisy input data for the sphere geometry experiment (\cref{subsection:example:sphere}).}
	\label{figure:sphere:noisy-data}
\end{figure}

\Cref{figure:sphere} shows the solution of problem \eqref{eq:general-model-repeated} for different values of the assignment weight~$\assignmentWeight$ and the TV weight~$\TVweight$.
We observe that as the assignment weight~$\assignmentWeight$ increases (top to bottom), the normal vectors align more and more with one of the preferred normal directions while simultaneously the noise is removed.
With increasing total-variation weight~$\TVweight$ (left to right), regions of constant assignments grow larger, eventually leading to only a subset of the preferred normal vectors to be used.
For instance, the solution $\TVweight = 1$, $\assignmentWeight = 0.1$ (\cref{figure:sphere:alpha-large-beta-large}, bottom right) uses only~$14$ of the $L = 20$~labels available.

\begin{figure*}[htp]
	\centering
	\begin{subfigure}[b]{0.3\textwidth}
		\centering
		\includegraphics[width = \textwidth]{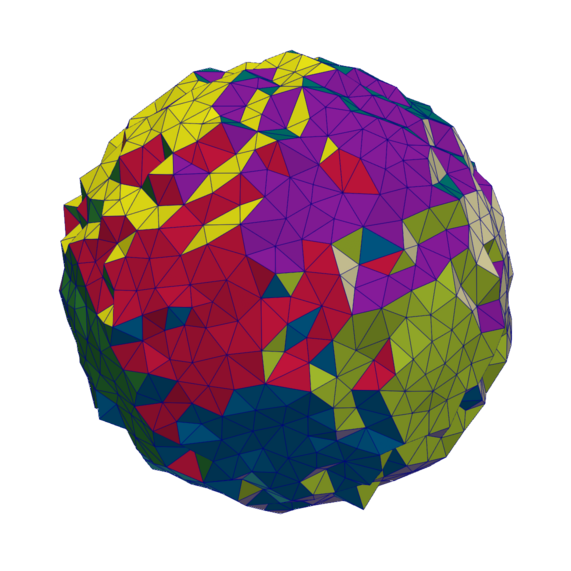}
		\caption{$\assignmentWeight = 0.1$, $\TVweight = 0.0001$}
		\label{figure:sphere:alpha-small-beta-small}
	\end{subfigure}
	\hfill
	\begin{subfigure}[b]{0.3\textwidth}
		\centering
		\includegraphics[width = \textwidth]{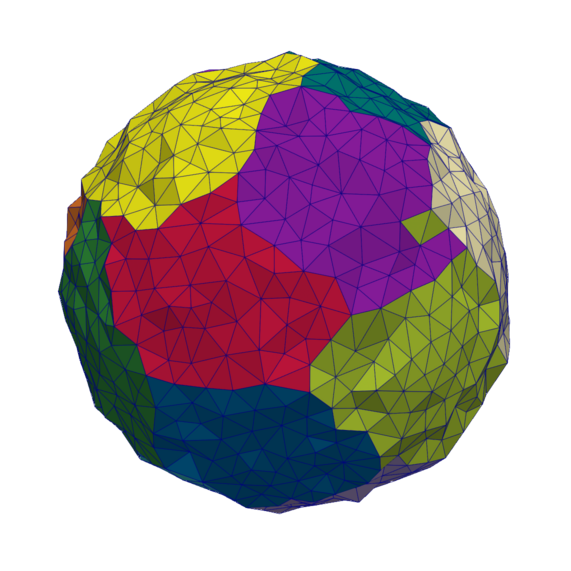}
		\caption{$\assignmentWeight = 0.1$, $\TVweight = 0.001$}
		\label{figure:sphere:alpha-small-beta-medium}
	\end{subfigure}
	\hfill
	\begin{subfigure}[b]{0.3\textwidth}
		\centering
		\includegraphics[width = \textwidth]{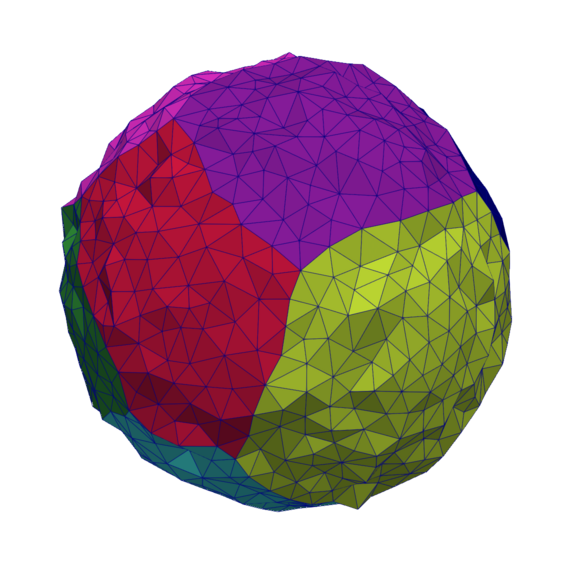}
		\caption{$\assignmentWeight = 0.1$, $\TVweight = 0.01$}
		\label{figure:sphere:alpha-small-beta-large}
	\end{subfigure}
	\hfill
	\begin{subfigure}[b]{0.3\textwidth}
		\centering
		\includegraphics[width = \textwidth]{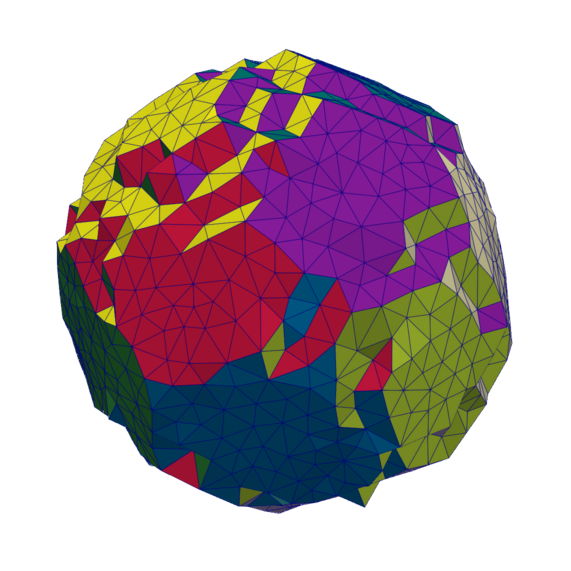}
		\caption{$\assignmentWeight = 0.3$, $\TVweight = 0.0003$}
		\label{figure:sphere:alpha-medium-beta-small}
	\end{subfigure}
	\hfill
	\begin{subfigure}[b]{0.3\textwidth}
		\centering
		\includegraphics[width = \textwidth]{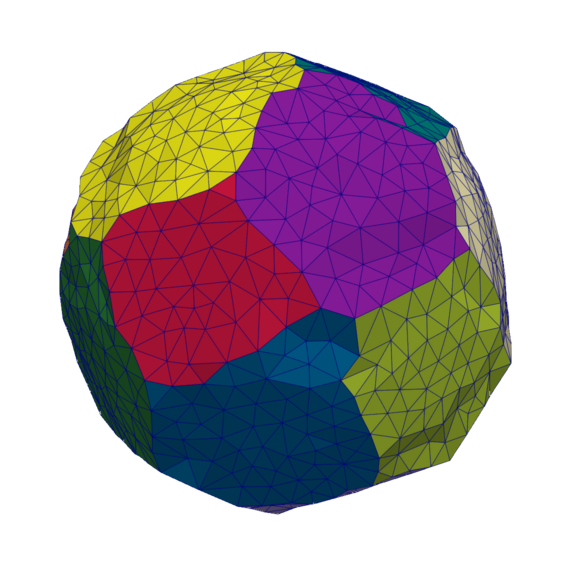}
		\caption{$\assignmentWeight = 0.3$, $\TVweight = 0.003$}
		\label{figure:sphere:alpha-medium-beta-medium}
	\end{subfigure}
	\hfill
	\begin{subfigure}[b]{0.3\textwidth}
		\centering
		\includegraphics[width = \textwidth]{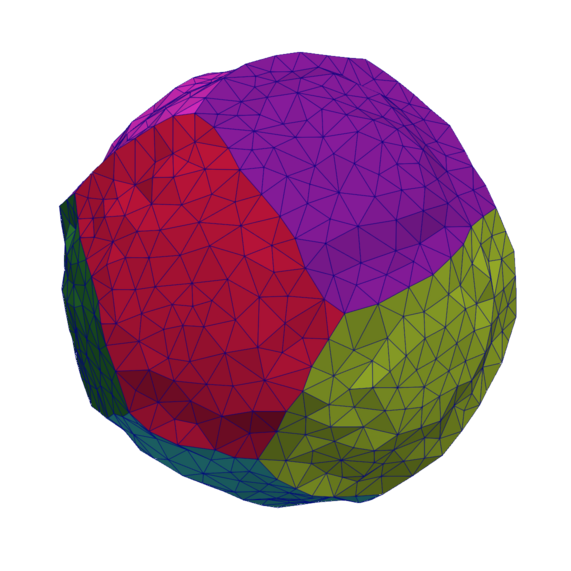}
		\caption{$\assignmentWeight = 0.3$, $\TVweight = 0.03$}
		\label{figure:sphere:alpha-medium-beta-large}
	\end{subfigure}
	\hfill
	\begin{subfigure}[b]{0.3\textwidth}
		\centering
		\includegraphics[width = \textwidth]{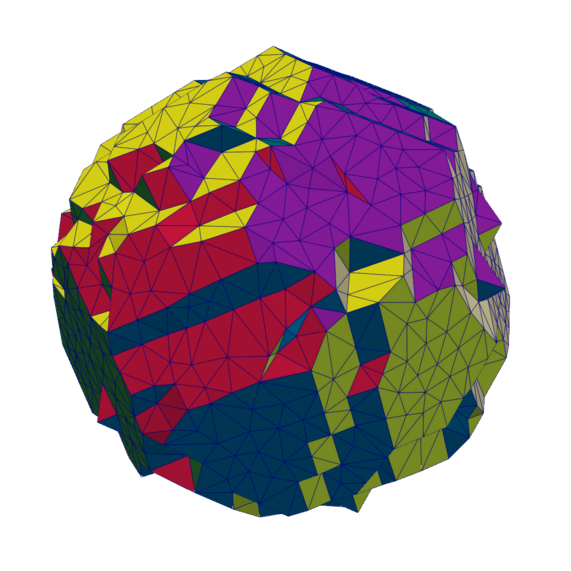}
		\caption{$\assignmentWeight = 1$, $\TVweight = 0.001$}
		\label{figure:sphere:alpha-large-beta-small}
	\end{subfigure}
	\hfill
	\begin{subfigure}[b]{0.3\textwidth}
		\centering
		\includegraphics[width = \textwidth]{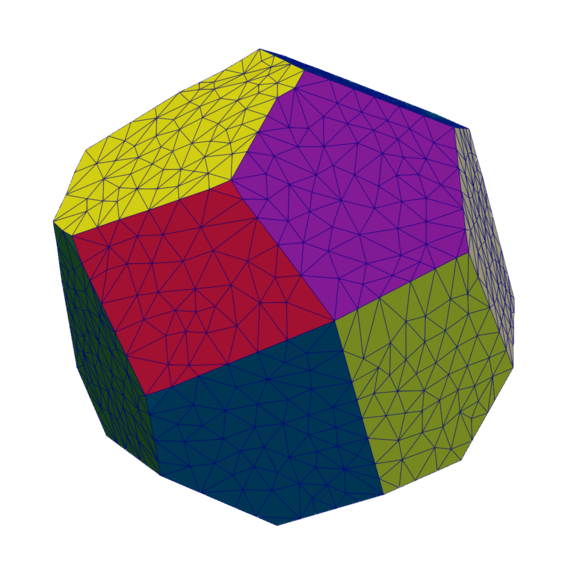}
		\caption{$\assignmentWeight = 1$, $\TVweight = 0.01$}
		\label{figure:sphere:alpha-large-beta-medium}
	\end{subfigure}
	\hfill
	\begin{subfigure}[b]{0.3\textwidth}
		\centering
		\includegraphics[width = \textwidth]{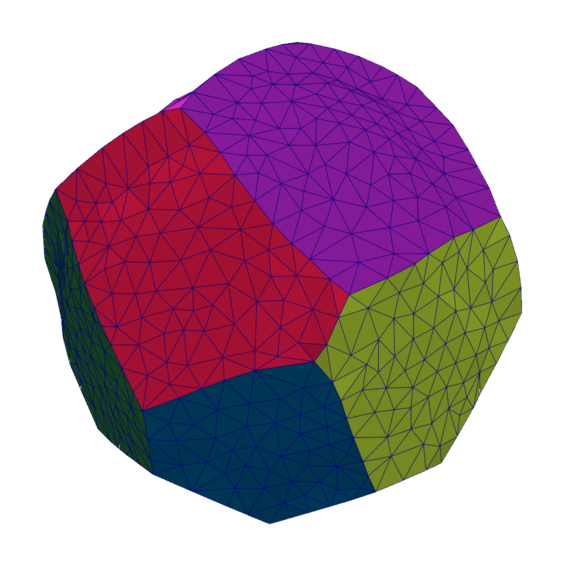}
		\caption{$\assignmentWeight = 1$, $\TVweight = 0.1$}
		\label{figure:sphere:alpha-large-beta-large}
	\end{subfigure}
	\caption{%
		Solutions for problem \eqref{eq:general-model-repeated} for the sphere example (\cref{subsection:example:sphere}, $L = 20$ labels) with noise as shown in \cref{figure:sphere:noisy-data} for different values of the assignment weight~$\assignmentWeight$ and the TV weight~$\TVweight$.
		Cells are colored according to the assigned label.
	}
	\label{figure:sphere}
\end{figure*}

\subsection{Platonic Solids}
\label{subsection:example:platonic-solids}

In this section, we demonstrate that the model \eqref{eq:general-model-repeated} is capable of affecting large deformations of the initial geometry, provided that the assignment weight~$\assignmentWeight$ is sufficiently large.
To demonstrate this, we choose a setup in which a sphere mesh is deformed into one of several platonic solids, which feature only a few distinct preferred normal vectors~$\gl$.
We use a discretization of the unit sphere~$\Sphere$ into \num{2601}~vertices and \num{5198}~triangles obtained again using \mshr.
No noise is added to the resulting vertex positions~$\vertexdata$.
The labels $\gl$ are chosen as the normal vectors of certain platonic solids.
Specifically, we consider the tetrahedron ($L = 4$), and the dodecahedron ($L = 12$) as examples.
We use a large value $\assignmentWeight = 20$ for the the assignment weight, so that the surface is forced to align with the preferred normal vectors.
All parameters are given in \cref{table:platonic-solids:parameters}.

\begin{table}[htb]
  \centering
  \begin{tabular}{lS}
		\toprule
    assignment weight~$\assignmentWeight$ \eqref{eq:general-model-repeated}
		&
    \num{20}
		\\
    total-variation weight~$\TVweight$ \eqref{eq:general-model-repeated}
		&
    \num{0.001}
		\\
    mesh quality weight~$\meshqualityweight$ \eqref{eq:fidelity-term}
		&
    \num{e-5}
		\\
    augmentation parameter~$\augm_1$ \eqref{eq:augmented-Lagrangian:problem}
		&
    \num{1000}
		\\
    augmentation parameter~$\augm_2$ \eqref{eq:augmented-Lagrangian:problem}
		&
    \num{10}
		\\
    augmentation parameter~$\augm_3$ \eqref{eq:augmented-Lagrangian:problem}
		&
    \num{1000}
		\\
    inner product parameter~$c$ \eqref{eq:deformation-field:inner-product}
    &
    \num{0.1}
    \\
		\bottomrule
  \end{tabular}
  \caption{%
		Parameters for the platonic solids examples (\cref{figure:platonic-solids:tetrahedron} and \cref{figure:platonic-solids:dodecahedron}).
		Both examples use the same set of parameters.
	}
  \label{table:platonic-solids:parameters}
\end{table}

The results are shown in \cref{figure:platonic-solids:tetrahedron} for the tetrahedron and in \cref{figure:platonic-solids:dodecahedron} for the dodecahedron.
In order to visualize the evolution of the mesh during \cref{algorithm:ADMM}, we display~$\sequence{\mesh}{k}$ for $k = 0$ (the initial sphere mesh), $k = 50$ and $k = 500$ as intermediate results and the final mesh at convergence.

\begin{figure*}[htp]
	\centering
	\begin{subfigure}[b]{0.47\textwidth}
		\centering
		\includegraphics[width = \textwidth]{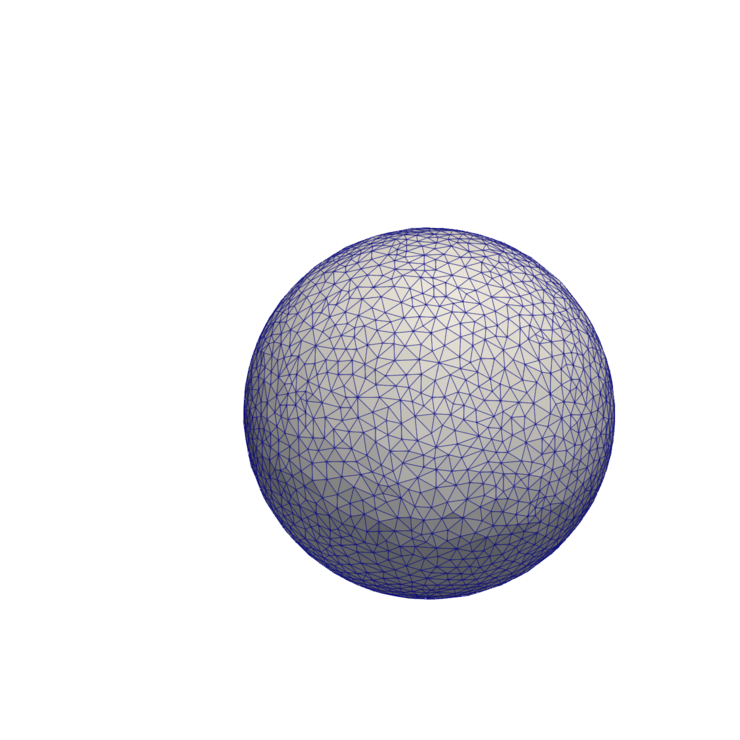}
		\caption{$k = 0$ (initial mesh)}
		\label{figure:platonic-solids:tetrahedron:input}
	\end{subfigure}
	\hfill
	\begin{subfigure}[b]{0.47\textwidth}
		\centering
		\includegraphics[width = \textwidth]{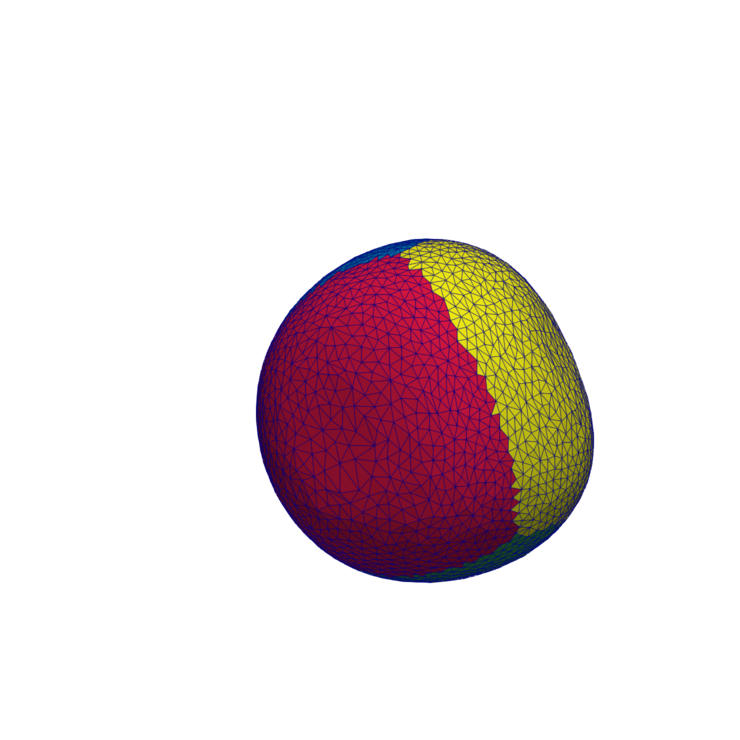}
		\caption{$k = 50$}
		\label{figure:platonic-solids:tetrahedron:early}
	\end{subfigure}
	\begin{subfigure}[b]{0.47\textwidth}
		\centering
		\includegraphics[width = \textwidth]{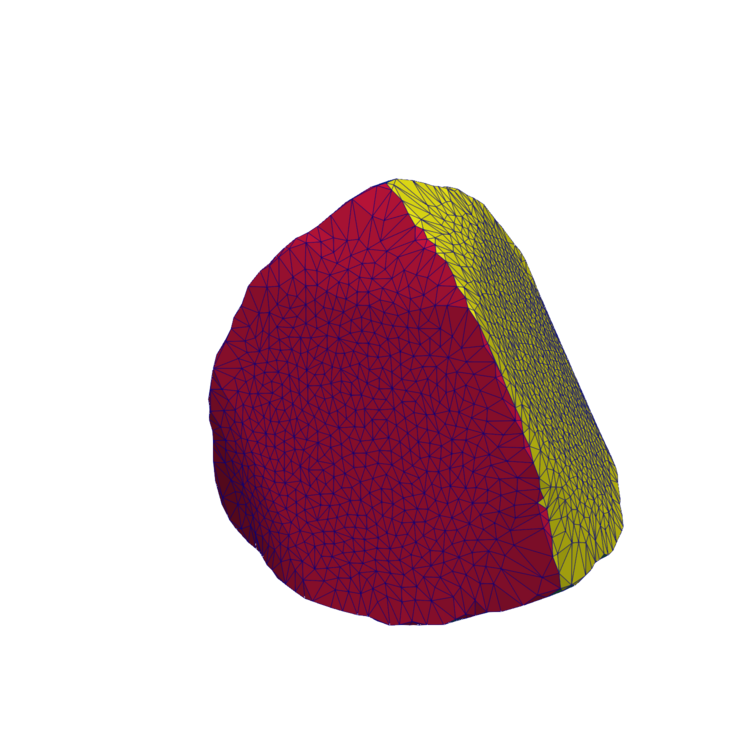}
		\caption{$k = 500$}
		\label{figure:platonic-solids:tetrahedron:late}
	\end{subfigure}
	\begin{subfigure}[b]{0.47\textwidth}
		\centering
		\includegraphics[width = \textwidth]{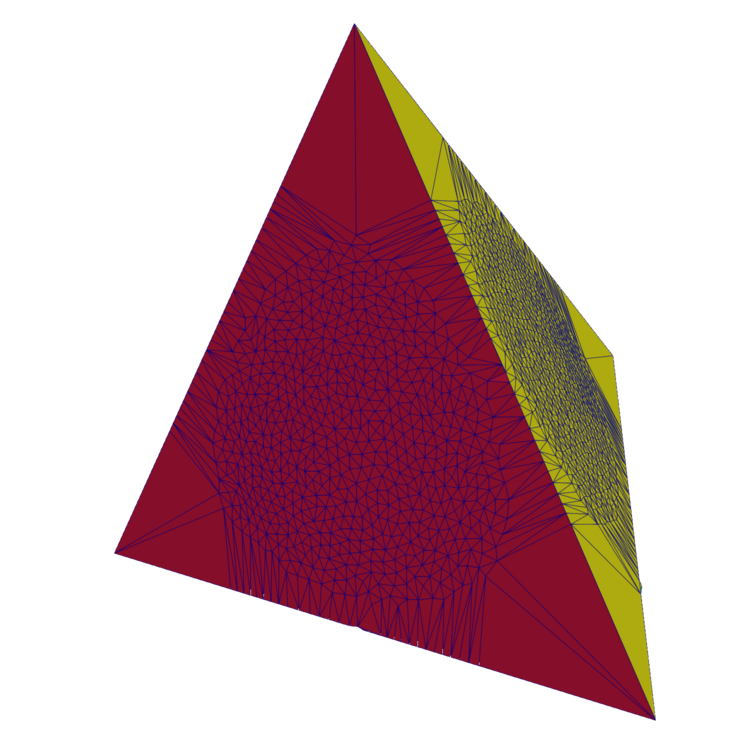}
    \caption{$k = 2117$ (final result)}
		\label{figure:platonic-solids:tetrahedron:result}
	\end{subfigure}
	\caption{%
    Iteration history of \cref{algorithm:ADMM} for the tetrahedron platonic solid problem (\cref{subsection:example:platonic-solids}, $L = 4$~labels) with a sphere as initial guess and an assignment weight~$\assignmentWeight = 20$.
		Cells are colored according to the assigned label.
	}
	\label{figure:platonic-solids:tetrahedron}
\end{figure*}

\begin{figure*}[htp]
	\centering
	\begin{subfigure}[b]{0.47\textwidth}
		\centering
		\includegraphics[width = \textwidth]{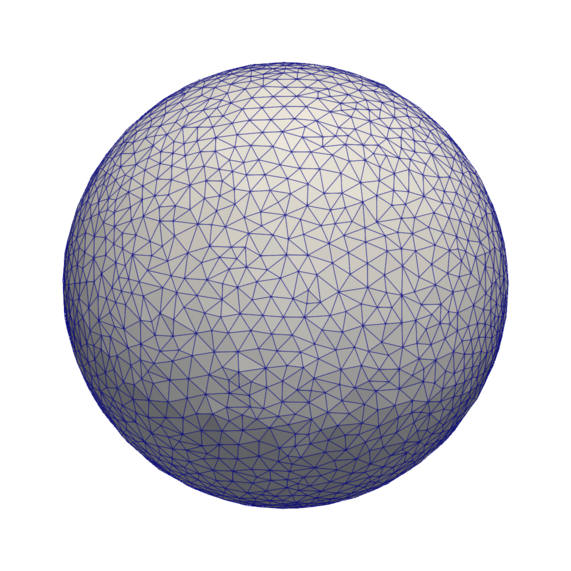}
		\caption{$k = 0$ (initial mesh)}
		\label{figure:platonic-solids:dodecahedron:input}
	\end{subfigure}
	\hfill
	\begin{subfigure}[b]{0.47\textwidth}
		\centering
		\includegraphics[width = \textwidth]{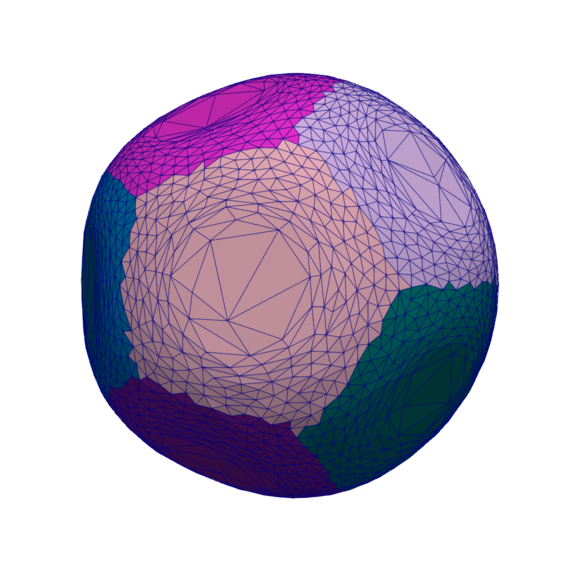}
		\caption{$k = 50$}
		\label{figure:platonic-solids:dodecahedron:early}
	\end{subfigure}
	\begin{subfigure}[b]{0.47\textwidth}
		\centering
		\includegraphics[width = \textwidth]{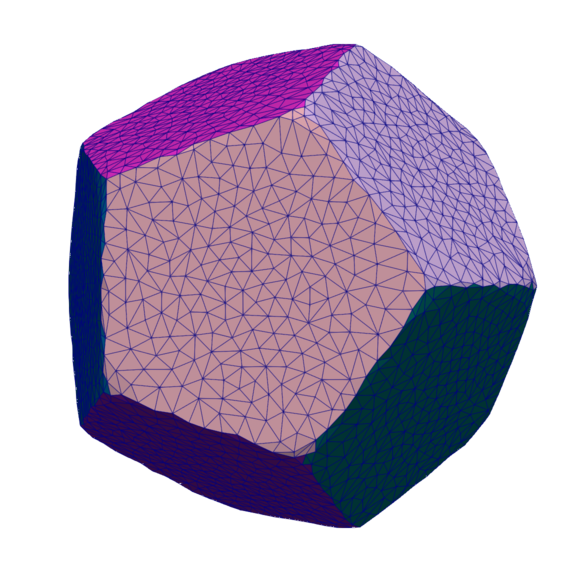}
		\caption{$k = 500$}
		\label{figure:platonic-solids:dodecahedron:late}
	\end{subfigure}
	\begin{subfigure}[b]{0.47\textwidth}
		\centering
		\includegraphics[width = \textwidth]{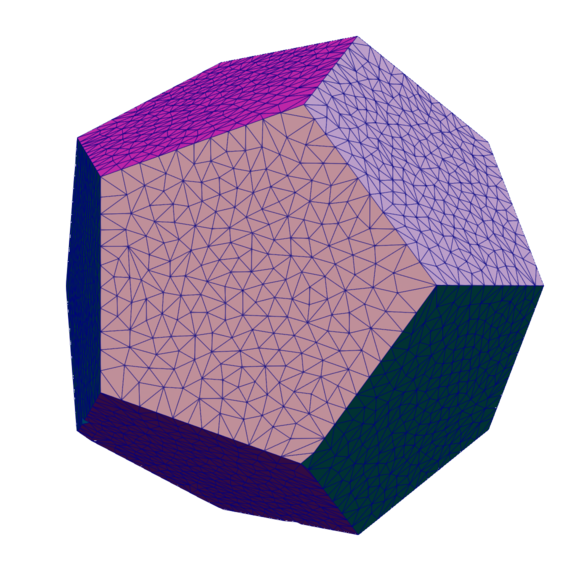}
		\caption{$k = 4202$ (final result)}
		\label{figure:platonic-solids:dodecahedron:result}
	\end{subfigure}
	\caption{%
    Iteration history of \cref{algorithm:ADMM} for the dodecahedron platonic solid problem (\cref{subsection:example:platonic-solids}, $L = 12$~labels) with a sphere as initial guess and an assignment weight~$\assignmentWeight = 20$.
		Cells are colored according to the assigned label.
	}
	\label{figure:platonic-solids:dodecahedron}
\end{figure*}

\subsection{City Skyline Denoising}
\label{subsection:example:city-skyline}

In this section, we present an example closer to a real-world application, in which preferred normal vectors can be used as a prior knowledge.
We consider a synthetically created city skyline.
On the ground surface, a $5 \times 5$ grid of buildings of variable size are placed with random heights and offsets, but aligned with the axes.
The resulting surface (\cref{figure:city-skyline:ground-truth}) is discretized into a triangle mesh with \num{6288}~triangles and \num{3146}~vertices (\cref{figure:city-skyline:ground-truth}) using \gmsh; see \cite{GeuzaineRemacle:2009:1}.
Similar to the example in \cref{subsection:example:sphere}, we add Gaussian noise with variance $\sigma^2 = 0.04 \, e^2$ to the vertex positions, yielding an input mesh with noisy vertex positions~$\vertexdata$ displayed in \cref{figure:city-skyline:noisy-data}.
In practice, such noisy data might result from a 3D scan of the city skyline, for instance using LiDAR scanning \cite{KrishnanCrosbyNandigamPhanCowartBaruArrowsmith:2011:1} or photogrammetry.

In this example, we compare two denoising methods.
We use the proposed model \eqref{eq:general-model-repeated} which we provide with the six obvious preferred normal directions, \ie, antipodal unit vectors in the directions of the axes.
We compare the results with the baseline total variation-based denoising model~\eqref{eq:problem:baseline-tv-based-denoising}; see also \cite[eq.(4.1)]{BaumgaertnerBergmannHerzogSchmidtVidalNunezWeiss:2025:1}.
Once again, we choose the objective~$\fidelity$ defined in~\eqref{eq:fidelity-term}.

The parameters $\assignmentWeight$ and $\TVweight$ for \eqref{eq:general-model-repeated} and parameter $\gamma$ for the baseline model~\eqref{eq:problem:baseline-tv-based-denoising} are chosen by a manual grid search for each model such that the resulting mesh approximately minimizes $\fidelity_1(\mesh;\groundtruth)$ as a measure of distance to the ground truth mesh.
The parameters for the optimal setup for the proposed model~\eqref{eq:general-model-repeated} are shown in \cref{table:city-skyline:parameters}.
For the baseline model~\eqref{eq:problem:baseline-tv-based-denoising}, we use $\gamma = 0.015$, $\augm = 0.1$ and $\meshqualityweight = 2 \cdot 10^{-8}$.

\begin{table}[htb]
  \centering
  \begin{tabular}{lS}
		\toprule
    assignment weight~$\assignmentWeight$ \eqref{eq:general-model-repeated}
		&
    \num{1}
		\\
    total-variation weight~$\TVweight$ \eqref{eq:general-model-repeated}
		&
    \num{e-8}
		\\
    mesh quality weight~$\meshqualityweight$ \eqref{eq:fidelity-term}
		&
    \num{e-7}
		\\
    augmentation parameter~$\augm_1$ \eqref{eq:augmented-Lagrangian:problem}
		&
    \num{12.5}
		\\
    augmentation parameter~$\augm_2$ \eqref{eq:augmented-Lagrangian:problem}
		&
    \num{1.25}
		\\
    augmentation parameter~$\augm_3$ \eqref{eq:augmented-Lagrangian:problem}
		&
    \num{12.5}
		\\
    inner product parameter~$c$ \eqref{eq:deformation-field:inner-product}
    &
    \num{0.3}
    \\
		\bottomrule
  \end{tabular}
  \caption{Parameters for the city skyline example (\cref{figure:city-skyline:preferred-normals}).}
  \label{table:city-skyline:parameters}
\end{table}

The results of this experiment are shown in \cref{figure:city-skyline}.
They clearly highlight the benefit of using the preferred normal vectors as prior knowledge on the denoising result.

\begin{figure*}[htp]
	\centering
	\begin{subfigure}[b]{0.47\textwidth}
		\centering
		\includegraphics[width = \textwidth]{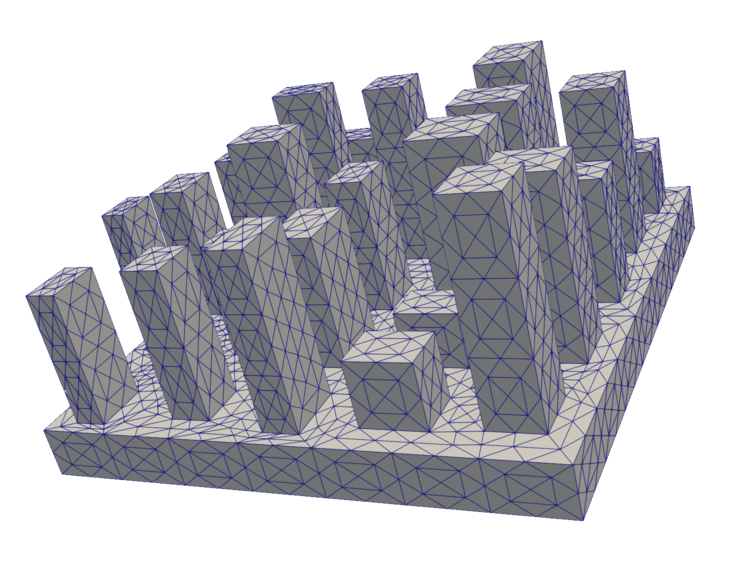}
		\caption{ground truth mesh}
		\label{figure:city-skyline:ground-truth}
	\end{subfigure}
	\hfill
	\begin{subfigure}[b]{0.47\textwidth}
		\centering
		\includegraphics[width = \textwidth]{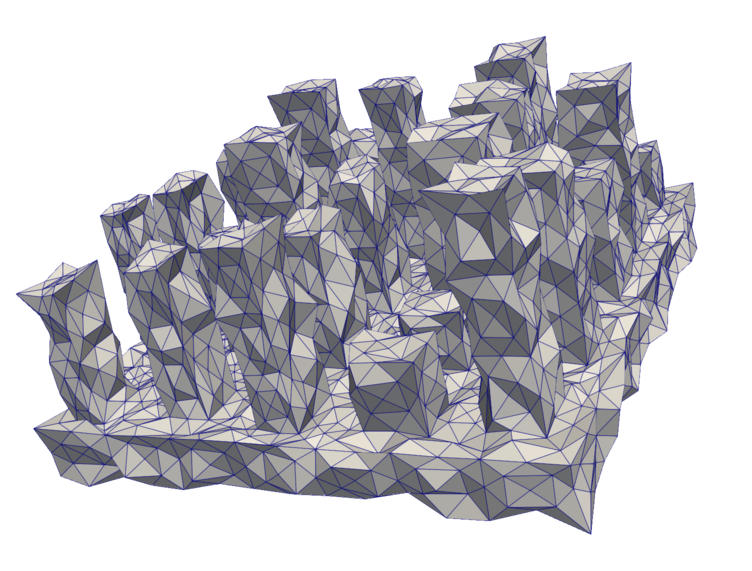}
		\caption{noisy input mesh}
		\label{figure:city-skyline:noisy-data}
	\end{subfigure}
	\begin{subfigure}[b]{0.47\textwidth}
		\centering
		\includegraphics[width = \textwidth]{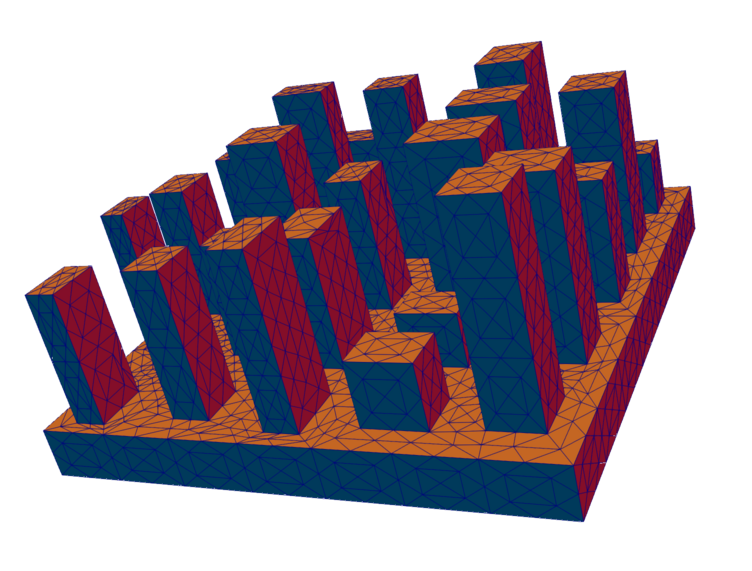}
		\caption{solution using the model \eqref{eq:general-model-repeated} with $\assignmentWeight = 1$ and $\TVweight = 10^{-8}$}
		\label{figure:city-skyline:preferred-normals}
	\end{subfigure}
	\hfill
	\begin{subfigure}[b]{0.47\textwidth}
		\centering
		\includegraphics[width = \textwidth]{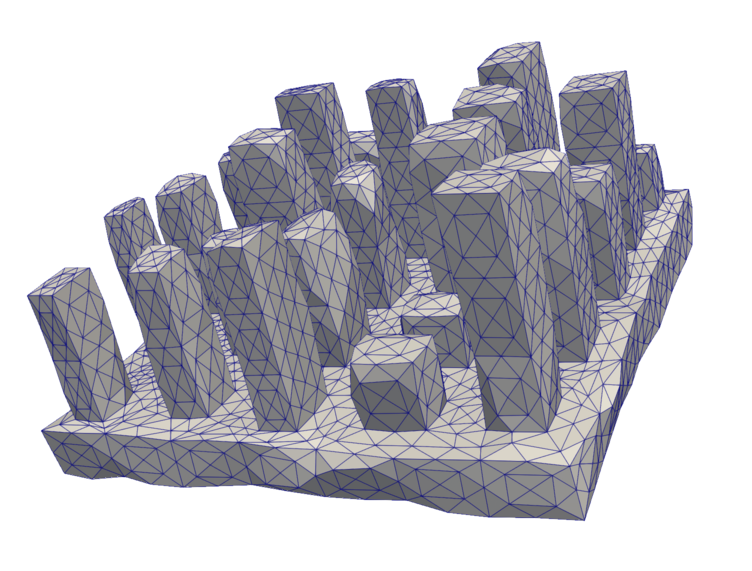}
    \caption{solution using the baseline TV model~\eqref{eq:problem:baseline-tv-based-denoising} with $\TVweight = 0.015$}
		\label{figure:city-skyline:denoising}
	\end{subfigure}
	\caption{Denoising of the city skyline geometry (\cref{subsection:example:city-skyline}) with the proposed model \eqref{eq:general-model-repeated} (bottom left) and the baseline total variation-based model \eqref{eq:problem:baseline-tv-based-denoising} (bottom right), which is not informed by preferred normal directions.}
	\label{figure:city-skyline}
\end{figure*}

The proposed model \eqref{eq:general-model-repeated} can recover the ground truth mesh with $\fidelity_1(\mesh;\groundtruth) = 0.138$, whereas the baseline TV denoising model without any additional normal direction information~\eqref{eq:problem:baseline-tv-based-denoising} results in a distance of $\fidelity_1(\mesh;\groundtruth) = 0.755$ and a visibly inferior outcome (\cref{figure:city-skyline:denoising}).
While the latter model also identifies regions of constant normal directions, those are not necessarily aligned with the coordinate axes.

\subsection{Stanford Bunny}
\label{subsection:example:stanford-bunny}

In this example, we apply problem \eqref{eq:general-model-repeated} for an artistic purpose, namely to modify a geometry to create a rough wood carving effect.
This is achieved by specifying a relatively small set of preferred normal directions distributed evenly around a sphere.
For this experiment, we consider the Stanford bunny mesh \cite{stanford-bunny-source} with \num{69427}~triangles and \num{34820}~vertices, preprocessed using \meshlab version~2023.12, \cite{CignoniCallieriCorsiniDellepianeGanovelliRanzuglia:2008:1} and \meshio \cite{Schloemer:2024:1} to remove all non-conforming triangles and improve the overall quality of the initial mesh.
No noise is added to the vertex positions~$\vertexdata$.
We consider two different label sets featuring $L = 20$ and $L = 30$ labels, respectively, distributed uniformly around the sphere by means of Fibonacci lattices.
Moreover, we set $\assignmentWeight = 100$ so that the triangles are strongly incentivized to follow the prescribed normal directions.
The remaining parameters are listed in \cref{table:bunny:parameters}.

\begin{table}[htb]
  \centering
  \begin{tabular}{lS}
		\toprule
    assignment weight~$\assignmentWeight$ \eqref{eq:general-model-repeated}
		&
    \num{100}
		\\
    total-variation weight~$\TVweight$ \eqref{eq:general-model-repeated}
		&
    \num{0.05}
		\\
    mesh quality weight~$\meshqualityweight$ \eqref{eq:fidelity-term}
		&
    \num{e-11}
		\\
    augmentation parameter~$\augm_1$ \eqref{eq:augmented-Lagrangian:problem}
		&
    \num{10000}
		\\
    augmentation parameter~$\augm_2$ \eqref{eq:augmented-Lagrangian:problem}
		&
    \num{100}
		\\
    augmentation parameter~$\augm_3$ \eqref{eq:augmented-Lagrangian:problem}
		&
    \num{10000}
		\\
    inner product parameter~$c$ \eqref{eq:deformation-field:inner-product}
    &
    \num{0.1}
    \\
		\bottomrule
  \end{tabular}
	\caption{Parameters for the Stanford bunny example (\cref{figure:bunny}).}
  \label{table:bunny:parameters}
\end{table}

The results are shown in \cref{figure:bunny}.

\begin{figure}[htb]
	\centering
	\begin{subfigure}[b]{0.48\textwidth}
		\centering
		\includegraphics[width = \textwidth]{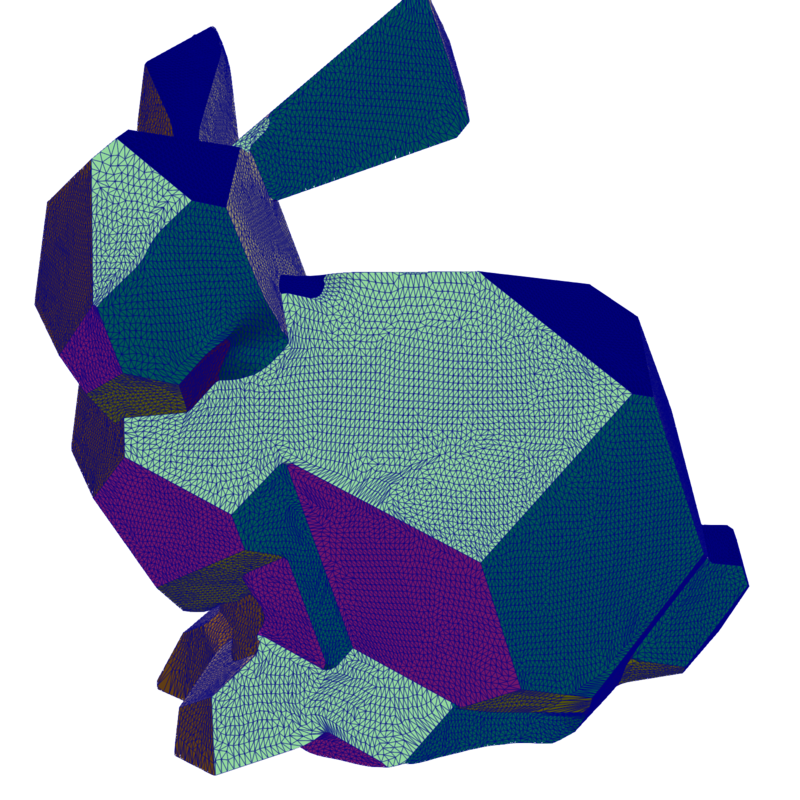}
		\caption{$L = 20$}
		\label{figure:bunny:L=20}
	\end{subfigure}
	\hfill
	\begin{subfigure}[b]{0.48\textwidth}
		\centering
		\includegraphics[width = \textwidth]{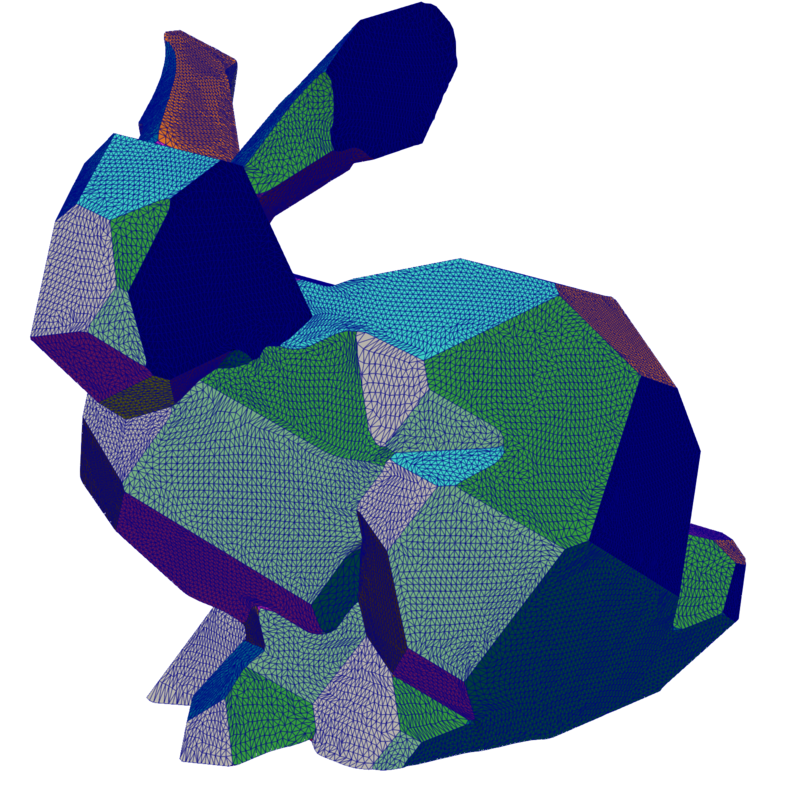}
		\caption{$L = 30$}
		\label{figure:bunny:L=30}
	\end{subfigure}
	\caption{%
		Solutions for problem \eqref{eq:general-model-repeated} for the Stanford bunny and different numbers~$L$ of preferred normal vectors.
		Cells are colored according to the assigned label.
	}
	\label{figure:bunny}
\end{figure}

\subsection{Inscription from Monastery Maulbronn}
\label{subsection:example:inscription}

This example aims to show the applicability of the proposed model \eqref{eq:general-model-repeated} to real-world data.
We consider a 3D scan of an inscription on a gravestone dated 1377, located at the monastery in Maulbronn, a UNESCO world heritage site in Germany; see \cite[p.~790]{Untermann:2024:1}.
The text is written in Gothic minuscules, which means that the carved letters exhibit a limited number of normal directions:
the letter outlines (as seen in 2D top-view projection) are either aligned with the coordinate axis, or they are diagonal at \SI{45}{\degree} angles.
An analysis shows that the letters' facets are also inclined at an approximate angle of \SI{22.5}{\degree} from the vertical axis.
Hence, we use the following set of $L = 9$ preferred normal vectors (see \cref{figure:inscription:labels})
\begin{equation}
  \label{equation:inscription:preferred-normals}
	\labels_\ell
  =
	\begin{pmatrix}
		\sin \paren[auto](){\ell \cdot \frac{2 \pi}{8}}
    \cdot
    \sin \paren[auto](){\theta}
		\\
		\cos \paren[auto](){\ell \cdot \frac{2 \pi}{8}}
    \cdot
    \sin \paren[auto](){\theta}
		\\
    \cos \paren[auto](){\theta}
	\end{pmatrix}
	\text{ for }
	\ell = 1, \ldots, 8
  ,
	\quad
	\labels_{9}
	=
	\begin{pmatrix}
		0
		\\
		0
		\\
		1
	\end{pmatrix}
	,
\end{equation}
where $\theta = \frac{\pi}{8}$.
The normal vector $\labels_{9}$ is aligned with the $z$-axis and represents the flat area with no inscription.

\begin{figure}[htb]
	\centering
  \includegraphics[width = 0.3\textwidth]{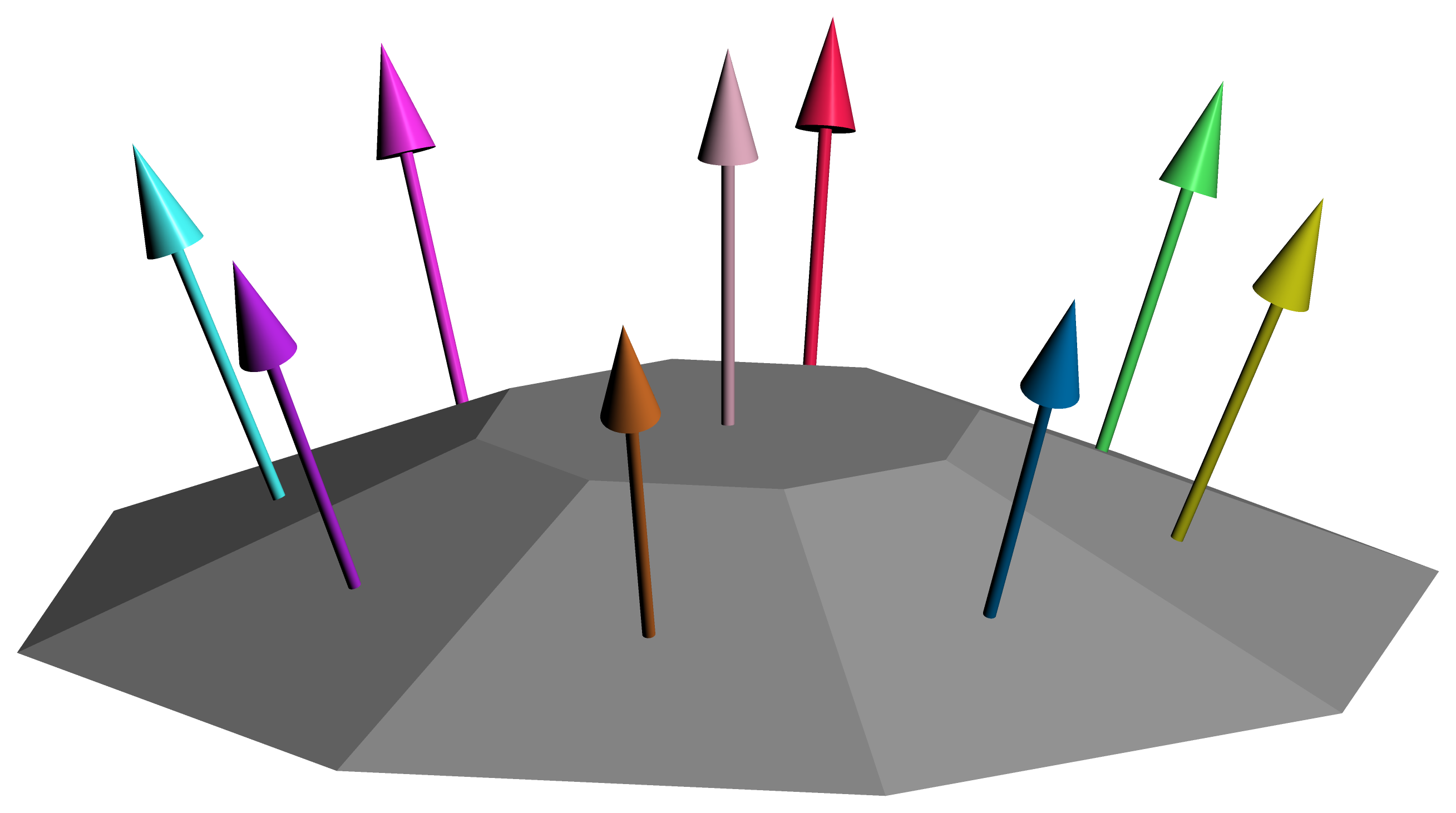}
  \caption{%
		Visualization of the label set \eqref{equation:inscription:preferred-normals} for the inscription example (\cref{subsection:example:inscription}).
	}
	\label{figure:inscription:labels}
\end{figure}

\begin{figure}[htb]
	\centering
	\begin{tikzpicture}
		\node[anchor=south west, inner sep=0] (image) at (0,0)
			{\includegraphics[width=0.8\textwidth]{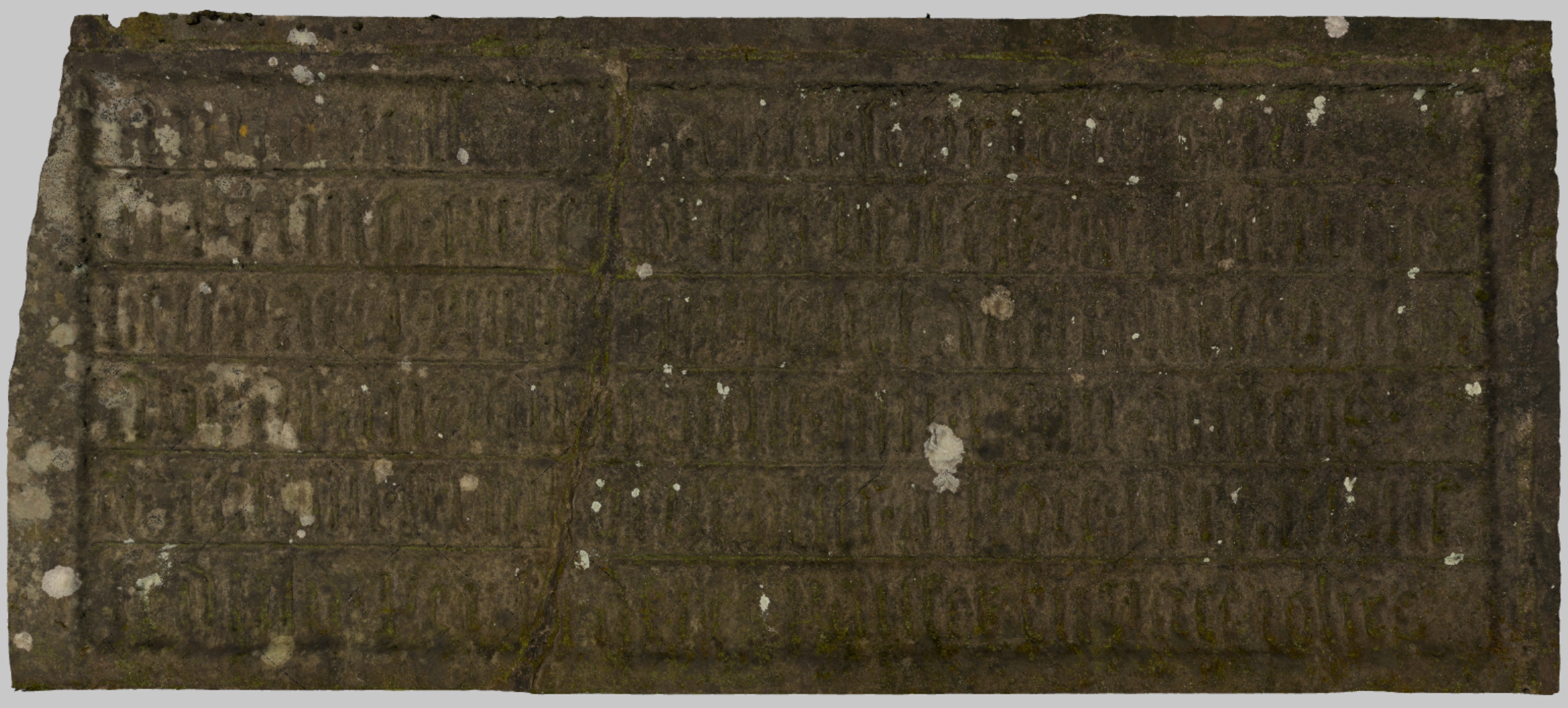}};
		\begin{scope}[x={(image.south east)}, y={(image.north west)}]
			\draw[red, very thick] (0.85,0.45) rectangle (0.97,0.77);
		\end{scope}
	\end{tikzpicture}
	\caption{Photo of the inscription at the monastery in Maulbronn, Germany.
	The red rectangle indicates the part of the inscription that is used for the denoising experiment in \cref{subsection:example:inscription}.}
	\label{figure:inscription:photo}
\end{figure}

We limit this numerical experiment to part of the full inscription.
The relevant part is shown in \cref{figure:inscription:photo} and it is represented by a mesh containing \num{321865}~vertices.
All parameters can be found in~\cref{table:inscription:parameters}.

\begin{table}[htb]
  \centering
  \begin{tabular}{lS}
		\toprule
    assignment weight~$\assignmentWeight$ \eqref{eq:general-model-repeated}
		&
    \num{5}
		\\
    total-variation weight~$\TVweight$ \eqref{eq:general-model-repeated}
		&
    \num{0.9}
		\\
    mesh quality weight~$\meshqualityweight$ \eqref{eq:fidelity-term}
		&
    \num{e-4}
		\\
    augmentation parameter~$\augm_1$ \eqref{eq:augmented-Lagrangian:problem}
		&
    \num{20}
		\\
    augmentation parameter~$\augm_2$ \eqref{eq:augmented-Lagrangian:problem}
		&
    \num{20}
		\\
    augmentation parameter~$\augm_3$ \eqref{eq:augmented-Lagrangian:problem}
		&
    \num{20}
		\\
    inner product parameter~$c$ \eqref{eq:deformation-field:inner-product}
    &
    \num{0.1}
    \\
		\bottomrule
  \end{tabular}
	\caption{Parameters for the inscription example (\cref{figure:inscription}).}
  \label{table:inscription:parameters}
\end{table}

\begin{figure}[htp]
	\centering
	\begin{subfigure}[b]{0.32\textwidth}
		\centering
		\includegraphics[width = \textwidth]{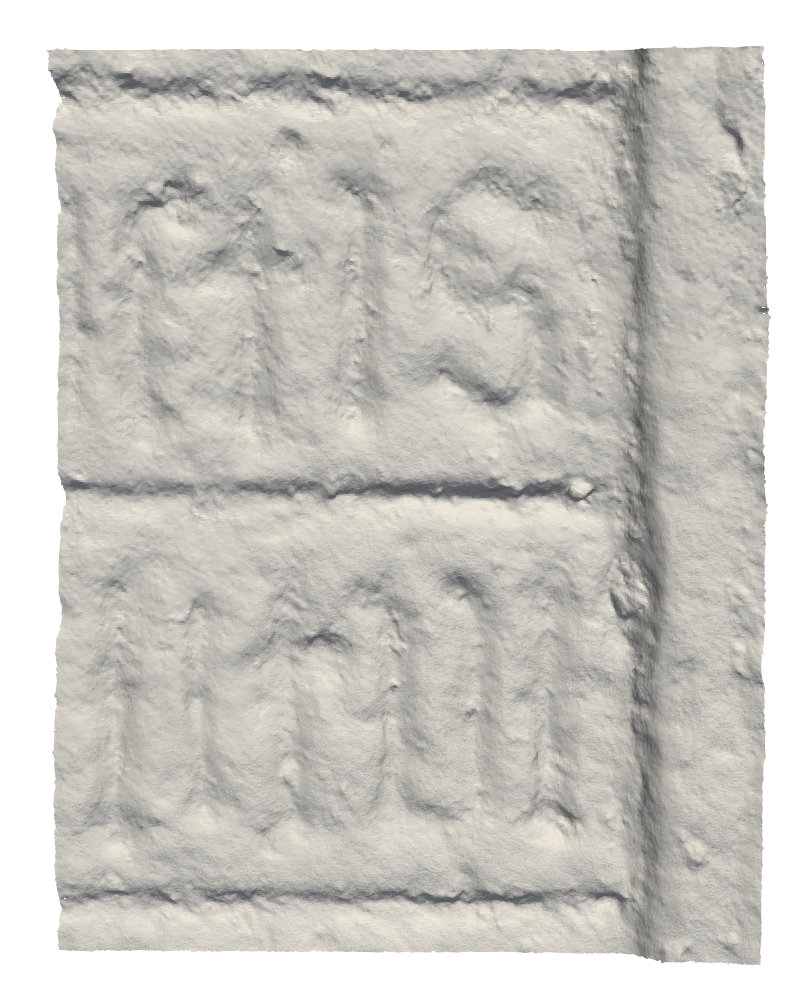}
	\end{subfigure}
	\begin{subfigure}[b]{0.32\textwidth}
		\centering
		\includegraphics[width = \textwidth]{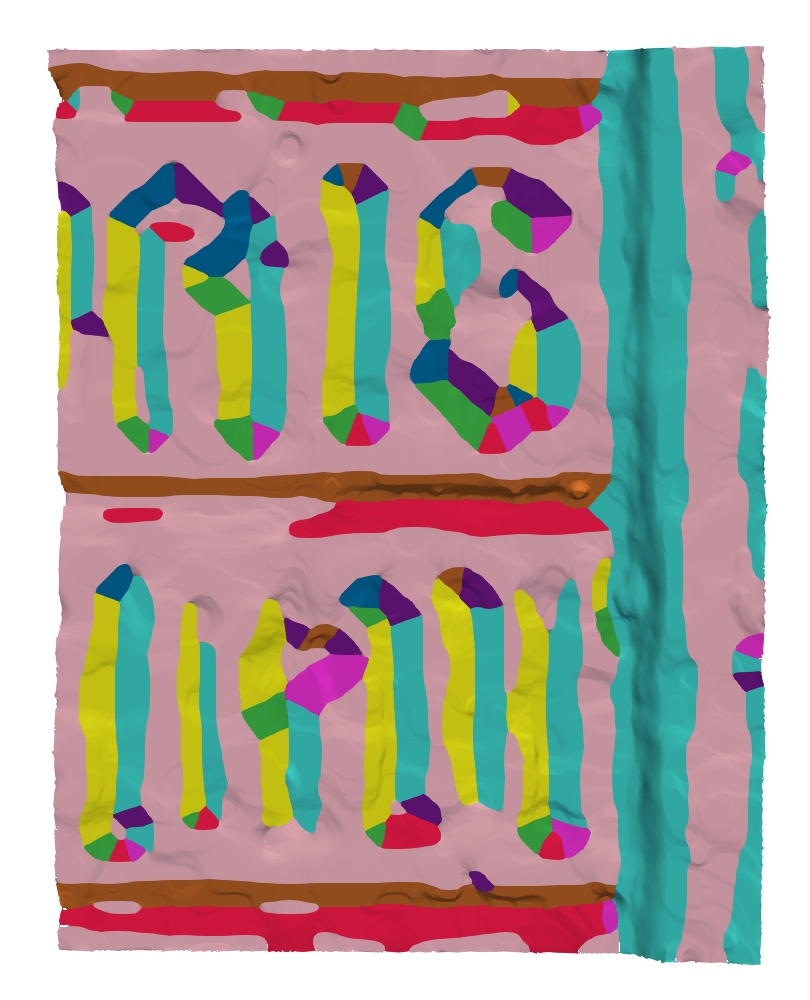}
	\end{subfigure}
	\begin{subfigure}[b]{0.32\textwidth}
		\centering
		\includegraphics[width = \textwidth]{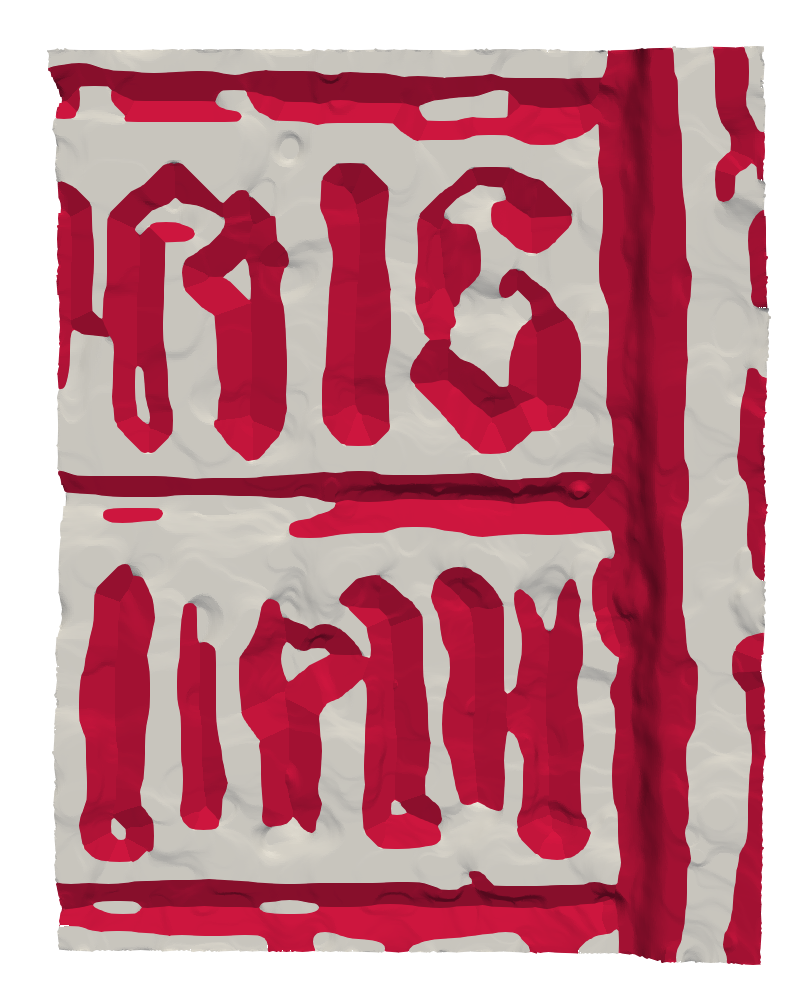}
	\end{subfigure}
  \caption{%
		Denoising of the inscription (\cref{subsection:example:inscription}) with the proposed model \eqref{eq:general-model-repeated} and model parameters defined in \cref{table:inscription:parameters}.
		Left: noisy input mesh.
		Center: denoised solution using model \eqref{eq:general-model-repeated} showing all labels in with distinct colors; see \eqref{equation:inscription:preferred-normals} and \cref{figure:inscription:labels}.
		Right: the same solution showing labels~$\labels_{1, \ldots, 8}$ in red and $\labels_{9}$ in gray.
		Triangles are colored according to their assigned label directions.
		The letters read \enquote{ctis} (top line), and \enquote{litur} (bottom line).
	}
	\label{figure:inscription}
\end{figure}

We observe that the model \eqref{eq:general-model-repeated} is able to denoise the inscription and help recover the carved letters while preserving the flat areas of the stone surface.
Some difficulties can be observed with the letter~\enquote{s} in the top right.

\section{Conclusion}
\label{section:conclusion}

In this paper, we presented a novel model \eqref{eq:general-model-repeated} for denoising triangular surface meshes based on a set of preferred normal directions.
The idea is to couple the denoising problem with an assignment problem, where the assignment function $\assign$ assigns to each triangle one of the preferred normal directions.
We presented an ADMM scheme that solves the model problem \eqref{eq:general-model-repeated} by splitting the problem into easy-to-solve subproblems.
The shape update step (\cref{subsection:admm:shape-optimization-problem}), which is computationally the most expensive subproblem, is solved efficiently using Newton's method with a truncated conjugate gradient method for the arising linear systems.
We demonstrated that the total-variation penalty on the assignment function can be used to control the size of the flat regions of constant assignment.
It was also demonstrated that the additional information provided by preferred normal directions can significantly improve the denoising result.
The final example (\cref{subsection:example:inscription}) shows that the proposed model can be applied in the context of archeological analysis of 3D scans, where further information about the scanned geometry is known from context a priori.

\section*{Acknowledgements}

We gratefully thank Steffen Bauer (Heidelberg University, \url{https://vngg.iwr.uni-heidelberg.de/People/bauer/}) for providing the 3D scan of the inscription from Monastery Maulbronn used in \cref{subsection:example:inscription} and helpful discussions.

\appendix

\printbibliography

\end{document}